%% file: main.tex
\documentclass{article}

\PassOptionsToPackage{numbers, compress}{natbib}

\usepackage[preprint]{neurips_2026}

\usepackage[utf8]{inputenc} 
\usepackage[T1]{fontenc}    
\usepackage{hyperref}       
\usepackage{url}            
\usepackage{booktabs}       
\usepackage{amsfonts}       
\usepackage{nicefrac}       
\usepackage{microtype}      
\usepackage{xcolor}         
\usepackage{adjustbox}
\usepackage{amsmath}
\usepackage{amssymb}
\usepackage{mathtools}
\usepackage{amsthm}

\usepackage{algorithm}
\usepackage{algorithmic}

\usepackage{graphicx}
\usepackage{subfigure}
\usepackage{multirow} 

\usepackage[capitalize]{cleveref}
\usepackage{fontawesome5}
\usepackage{xurl}
\usepackage{enumitem}
\usepackage{threeparttable} 

\theoremstyle{plain}
\newtheorem{theorem}{Theorem}[section]

\newtheorem{corollary}[theorem]{Corollary}
\theoremstyle{definition}

\theoremstyle{remark}
\newtheorem{remark}[theorem]{Remark}

\newcommand{\R}{\mathbb{R}}

\def\objname{\textsf{DRIO}}

\title{Multivariate Time Series Data Imputation via Distributionally Robust Regularization}

\author{%
  Che-Yi Liao\textsuperscript{1}\thanks{email: cliao48@gatech.edu} 
  \hspace{1em} 
  Zheng Dong\textsuperscript{1}
  \hspace{1em} 
  Gian Gabriel Garcia\textsuperscript{2}
  \hspace{1em} 
  Kamran Paynabar\textsuperscript{1} \\
  \textsuperscript{1}Georgia Institute of Technology, Atlanta, GA, USA\\
  \textsuperscript{2}University of Washington, Seattle, WA, USA\\
}

\begin{document}

\maketitle

\begin{abstract}
Multivariate time series imputation is often compromised by mismatch between the observed and true data distributions, a bias induced by the combined effects of time-series non-stationarity and systematic missingness. Standard methods that encourage point-wise reconstruction or direct distributional alignment may overfit these biased observations. We propose the Distributionally Robust Regularized Imputer Objective (DRIO), which jointly minimizes reconstruction error and the worst-case divergence between the imputer distribution and data distributions within a Wasserstein ambiguity set. We derive a tractable upper-bound surrogate that reduces infinite-dimensional optimization over measures to adversarial search over sample trajectories, and develop an alternating learning algorithm compatible with modern deep learning backbones. Comprehensive experiments on diverse real-world datasets show that DRIO consistently provides robust imputation and suggests improved downstream forecasting under various missingness scenarios.
\end{abstract}

\input{introduction}
\input{method}
\input{numerical}
\input{conclusion}

\newpage
\bibliographystyle{unsrtnat}
\bibliography{references}
\input{appendix} 



\end{document}

%% file: introduction.tex
\section{Introduction}\label{sec:introduction}


Multivariate time series (MTS) data encode rich spatiotemporal dependencies that are critical for downstream tasks including forecasting, anomaly detection, and decision-making across real-world applications such as healthcare operations, traffic monitoring, industrial sensor networks, among others \citep{ghosh2009multivariate, kirchgassner2012introduction, hamilton2020time, yang2025responsible, yang2026development, liao2025constraint}. However, real-world time series measurements are frequently incomplete due to sensor failures, communication dropouts, or resource constraints, which highlights the importance of robust imputation methods to recover missing entries before subsequent analysis and decision-making \citep{
li2020spatiotemporal,
liao2022racial,
emmanuel2021survey, 
liao2025estimating, 
cai2025missing}.

In this MTS imputation task, the primary challenge is the \textbf{mismatch between the true data-generating process and the observed empirical distribution}. In other words, the observed data distribution $\widehat{\mathbb{P}}_N$ can be a biased estimator of the true data-generation distribution $\mathbb{P}_{\mathrm{true}}$, causing classical imputation methods to fail as they fit to the observed entries. This bias stems from two intertwined factors: (a) time series data non-stationarity and (b) systematic missingness patterns.

First, \textit{time series data non-stationarity} is a common phenomenon wherein the underlying data-generating process evolves over either temporal or sample space \citep{sayed2012learning,cheng2015time,ditzler2015learning,liao2026tides}. For example, in traffic modeling, weekday and weekend traffic patterns may exhibit markedly different statistical dependencies while different sensor locations can show distinct traffic flow patterns. Consequently, reliable estimation of the data distribution becomes progressively more challenging as the feature-temporal dimensionality increases and the non-stationary dynamics grow more complex \citep{dixit2023contemporary,tran2019sample,hsiao2025balancing}. 

This distribution estimation complexity can further be exaggerated by \textit{non-uniform missingness pattern} across MTS data entries. While measurements may be missing completely at random (MCAR) with each entry having an identical and independent probability of being unobserved, missing not at random (MNAR) are particularly prevalent in practice \citep{bollinger2014trouble, pham2022missing,cai2025missing}. Under MNAR, missingness can depend on the sample heterogeneity, feature values, and time, which causes specific regions of the data manifold to remain unobserved, introducing another layer of complexity in MTS imputation.

Existing time series imputation methods rarely \textit{explicitly} address this mismatch between empirical distribution and true data-generating process. Rather, they typically implicitly assume minimizing reconstruction error on observed entries or the divergence between imputers and the empirical distributions will generalize to missing entries and other samples \cite{zhang2025missing, afrifa2020missing, cao2018brits, shen2023bidirectional, mattei2019miwae, ipsen2020not, muzellec2020missing}. However, this assumption can fail when empirical distribution is a biased sample of the true data manifold since they do not directly incorporate \textit{bias or uncertainty} of distributional alignment into imputation.

\begin{figure*}[t]
\begin{center}
\includegraphics[width=\textwidth]{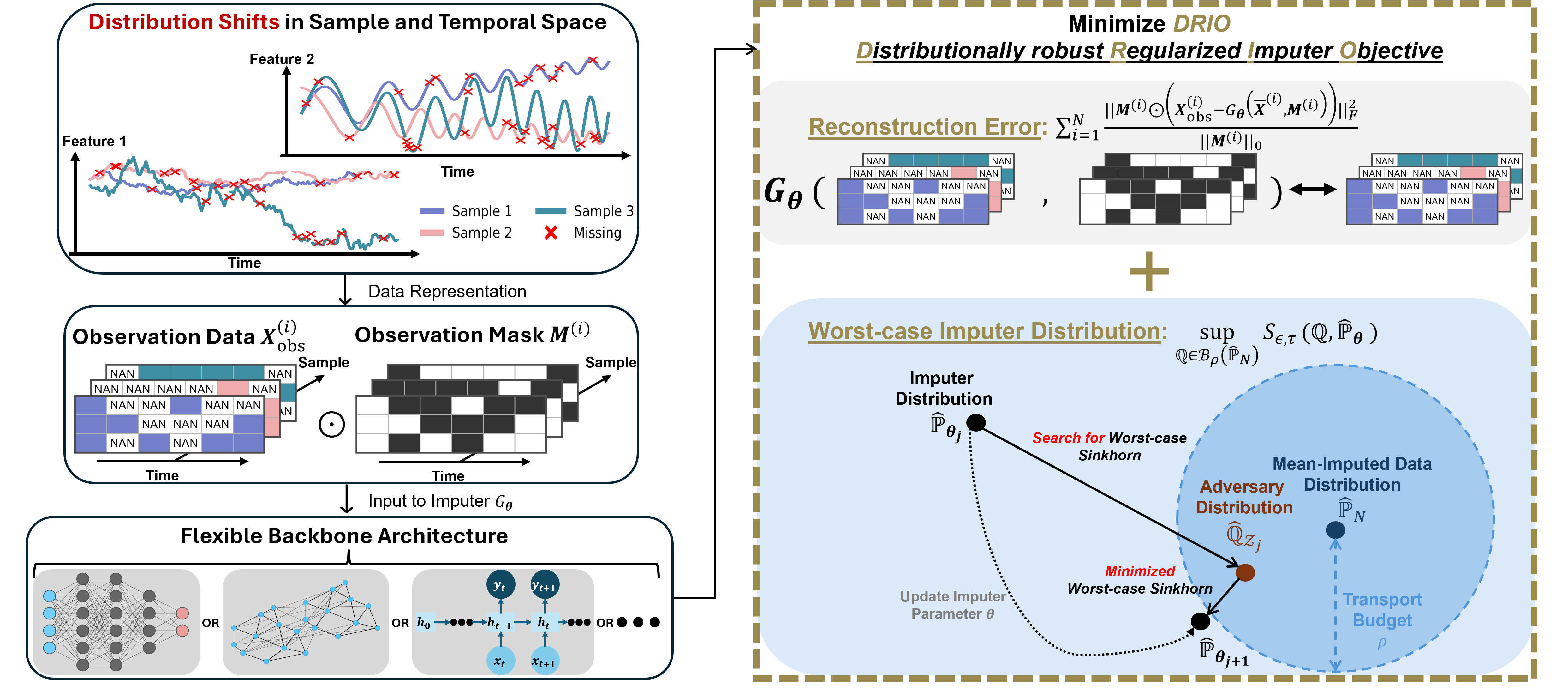}
\caption{Overview of the Multivariate Time Series Imputation framework under $\objname$.}
\label{fig:drio_framework}
\end{center}
\vskip -0.2in
\end{figure*}

To mitigate this distributional uncertainty around observed data distribution, we propose a novel \textbf{D}istributionally robust \textbf{R}egularized \textbf{I}mputer \textbf{O}bjective ($\objname$), which enables the imputer to hedge its predictions against worst-case distributional alignment within a Wasserstein ambiguity set centered at the empirical measures (\cref{fig:drio_framework}). $\objname$ acts as a regularizer that controls the trade-off between point-wise reconstruction accuracy and robustness of the imputer by minimizing the risk against an adversarial distribution that exploits the support bias of the observed data. $\objname$ is formulated as a training objective that can be implemented on top of a broad class of differentiable imputer backbones to enhance their robustness to potentially biased data, while its empirical performance may depend on the backbone design.

\textbf{Contribution.} Our main contributions are summarized as follows:
\begin{enumerate}[label=(\roman*)]
\item We are among the first to \textbf{formalize the non-stationary MTS imputation under distribution shifts} by explicitly addressing the distributional mismatch between observed data and the true data-generating process.
\item We propose a novel distributionally robust regularizer ($\objname$) that \textbf{balances reconstruction accuracy and worst-case distributional shifts}. Moreover, we \textbf{derive a tractable upper-bound surrogate} (Theorem~\ref{thm:dual-formulation} and Corollary~\ref{cor:fixed-gamma-drio}) that reduces the infinite-dimensional optimization over measures to an adversarial search over samples.
\item We \textbf{develop an efficient alternating optimization algorithm} (\cref{alg:dro_training}) that jointly updates adversarial trajectories and imputer parameters, enabling end-to-end training with differentiable model backbones.
\item We provide extensive empirical evidence across seven diverse real-world datasets showing that DRIO \textbf{achieves robust imputation and downstream forecasting performance under both MCAR and MNAR mechanisms at varying missing ratios} (10\%/50\%/90\%).
\end{enumerate}

\textbf{Related Work.}
MTS imputation methods can be broadly grouped into \textit{point-wise reconstruction} methods and \textit{distributional alignment} methods.

\textit{Point-wise Reconstruction Methods.}
A large body of work designs neural architectures to reconstruct missing entries from observed temporal and cross-feature dependencies. Early methods rely on recurrent dynamics and consistency losses \citep{cao2018brits}, while more recent approaches use attention, graph neural networks, convolutional modules, and Transformer-style backbones to capture long-range temporal patterns and spatial correlations in MTS data \citep{,suo2020glima,ma2020remian,li2020spatiotemporal,ye2021spatial,marisca2022learning,shen2023bidirectional,du2023saits,nie2024imputeformer}. While these methods have become increasingly expressive, they typically rely on point-wise reconstruction objectives, which when used alone can overfit the empirical observation pattern and become less robust when the observed distribution is biased by non-stationarity or systematic missingness.

\textit{Distributional and Generative Imputation.}
Another line of work aims to model the conditional distribution of missing values rather than only produce point estimates. Representative techniques include adversarial learning, variational inference, score-based diffusion, structured state-space diffusion, and missingness-aware generative modeling for MNAR settings~\citep{yoon2018gain,mattei2019miwae,ipsen2020not,Li2021SCVAE,tashiro2021csdi,alcaraz2022diffusion,gao2024diffimp}. 
Additionally, optimal transport (OT) theory has recently been used to align imputed and observed distributions through Wasserstein or Sinkhorn-type objectives, including extensions to temporal and frequency-domain structures~\citep{muzellec2020missing,huang2025temporalwassersteinimputationversatile,wang2025optimal}.
Overall, these methods explicitly incorporate uncertainty into modeling imputers and can provide geometrically meaningful alignment objectives, but usually treat the empirical observed distribution as an unbiased proxy for the true data-generating process, which may not prevent aligning to the empirical (and potentially biased) observed distribution.

%% file: method.tex
\section{Method}
\label{sec:method}
\textbf{Background and Notation.}
\label{sec:problem-setting}
We denote $\mathbb{P}_{\text{true}}$ as the probability distribution underlying the data-generating process that describes joint evolution of $D$ correlated features (e.g., sensors, spatial nodes) over time. We assume this process is observed at discrete timestamps $t \in \{1, \dots, T\}$ and a single realization forms a ground truth trajectory $\boldsymbol{X} \in \mathbb{R}^{D \times T}$. Since $\mathbb{P}_{\text{true}}$ captures complex time-varying dependencies on the feature-temporal space, the marginal distributions of features may shift significantly over the temporal horizon.

With the definition of data-generating process, we denote  $\{\boldsymbol{X}^{(1)}, \dots, \boldsymbol{X}^{(N)}\} \in \mathbb{R}^{N\times D\times T}$ as $N$ trajectories drawn from $\mathbb{P}_{\text{true}}$. To account for missing entries, we define an observation mask as $\boldsymbol{M}^{(i)} \in \{0, 1\}^{D \times T}$, which is a realization from a random matrix such that $\boldsymbol{M}^{(i)}_{d,t} = 1$ if $\boldsymbol{X}^{(i)}_{d,t}$ is observed and $0$ otherwise. Consequently, the $i$\textsuperscript{th} sample trajectory is denoted by $\boldsymbol{X}^{(i)}_\mathrm{obs}\coloneqq\boldsymbol{X}^{(i)}\odot \boldsymbol{M}^{(i)}$, where $\odot$ is the Hadamard product (See top left of Figure~\ref{fig:drio_framework}). Moreover, we define the empirical mean value for feature $d$ at a time $t$ as $\overline{x}_{d,t}:=  \sum_{i}\boldsymbol{X}^{(i)}_{\mathrm{obs},d,t}/\sum_j\boldsymbol{M}^{(j)}_{d,t}$. We set $\overline{x}_{d,t} = 0$ if $\sum_j\boldsymbol{M}^{(j)}_{d,t} = 0$. Consequently, the $i$\textsuperscript{th}
mean-imputed sample is defined as $\overline{\boldsymbol{X}}^{(i)}$ where $\overline{\boldsymbol{X}}^{(i)}_{d,t}:=\boldsymbol{X}^{(i)}_{\mathrm{obs},d,t} + (1-\boldsymbol{M}^{(i)}_{d,t}) \overline{x}_{d,t}$. Moreover, $\widehat{\boldsymbol{X}}^{(i)}\coloneqq \boldsymbol{X}^{(i)}_{\mathrm{obs}} + (\boldsymbol{1} - \boldsymbol{M}^{(i)} )G_{\boldsymbol{\theta}}(\boldsymbol{X}^{(i)}_{\mathrm{obs}}, \boldsymbol{M}^{(i)}))$ is the imputed data by imputer $G_{\boldsymbol{\theta}}$ parametrized by $\boldsymbol{\theta}$.

\textbf{Sinkhorn Divergence.}
Let $\mu, \nu \in \mathcal{P}(\mathbb{R}^{D \times T})$ be two probability measures, where $\mathcal{P}(S)$ denotes the collection of all distributions supported on space $S$. The entropic unbalanced transport cost is defined as
$
    W_{\epsilon, \tau}(\mu, \nu) \coloneqq \inf_{\pi \geq 0} \{ \int \|\boldsymbol{x} - \boldsymbol{z}\|_F^2 \, \mathrm{d}\pi + \epsilon \, \mathrm{KL}(\pi \,|\, \mu \otimes \nu) + \tau ( \mathrm{KL}(\pi_1 \,|\, \mu) + \mathrm{KL}(\pi_2 \,|\, \nu) ) \},
$
where $\pi$ is a positive measure with marginals $\pi_1$ and $\pi_2$, $\epsilon > 0$ controls entropic regularization, and $\tau > 0$ governs marginal relaxation that allows mass creation and destruction. We use the standard KL relaxation in our formulation, which leads to the classical entropic unbalanced Sinkhorn updates. 
Other marginal divergences, such as $\chi^2$, could also be used but require different update rules and are left for future work
\citep{sejourne2019sinkhorn, choi2023generative}. The \textbf{Unbalanced Sinkhorn Divergence} is then:
$
    S_{\epsilon, \tau}(\mu, \nu) \coloneqq W_{\epsilon, \tau}(\mu, \nu) - \tfrac{1}{2} ( W_{\epsilon, \tau}(\mu, \mu) + W_{\epsilon, \tau}(\nu, \nu) )
$. 

While Wasserstein-$p$ distance is a natural choice for measuring distributional discrepancy, its exact computation requires solving a linear program whose optimal value is generally non-smooth with respect to the imputer parameters $\boldsymbol{\theta}$ through the imputed samples $\widehat{\boldsymbol{X}}^{(i)}$. Therefore, it is common to adopt the Sinkhorn Divergence since the entropic regularization enables efficient GPU-parallelizable computation and produces a fully differentiable objective, while the debiasing formula further corrects the entropic bias, ensuring that $S_{\epsilon,\tau}(\mu, \mu) = 0$ \cite{muzellec2020missing, wang2025optimal}. Moreover, Sinkhorn Divergence provides meaningful gradients even when distribution supports are disjoint, which is a robust alternative compared to other likelihood-based metrics, e.g., KL divergence, and point-wise metrics, e.g., least-square \citep{villani2008optimal,sejourne2019sinkhorn,wang2025sinkhorn}. Notably, the \textit{Unbalanced} Sinkhorn Divergence relaxes the strict mass conservation of standard optimal transport, allowing local mass creation and destruction via a soft penalty \citep{sejourne2019sinkhorn, choi2023generative}. This prevents outliers from dominating gradient updates (whether this outlier is from the ambiguity set or from poorly performing imputer predictions), thereby ensuring training stability. We expand the discussions of our design choices in Appendix \S\ref{sec:discuss-loss}.

\subsection{Distributionally Robust Imputation Regularizer}\label{sec:primal}
Our robust imputation objective ($\objname$) jointly minimizes \textit{point-wise reconstruction accuracy} of the observed data and \textit{worst-case divergence} of the empirical distribution to its neighbors. In this section, we first detail the worst-case divergence problem, then provide the primal view of the proposed robust imputation objective.

\textbf{Ambiguity Set Construction.}
Since raw data has missing entries, we construct a Wasserstein ambiguity set centered at the empirical distribution of the \textit{mean-imputed data} (Bottom right of Figure~\ref{fig:drio_framework}). Conceptually, this set contains all distributions that could plausibly have generated the observed data, with the set radius (transport budget) controlling the degree of distributional uncertainty and thus the robustness level. Specifically, we define the \textit{empirical measure} as $\widehat{\mathbb{P}}_N\coloneqq \sum_i\delta_{\overline{\boldsymbol{X}}^{(i)}}/N$ and the \textit{ground cost} as
$
    c_{\boldsymbol{X}}(\boldsymbol{Z}) = \| \boldsymbol{X} - \boldsymbol{Z} \|_F^2,
$
where $\boldsymbol{X}, \boldsymbol{Z}\in\mathbb{R}^{D\times T}$.
The ambiguity set $\mathcal{B}_\rho(\widehat{\mathbb{P}}_N)$ is then defined as the set of all probability measures $\mathbb{Q}$ supported on a subset of $\mathbb{R}^{D \times T}$ satisfying a transport budget constraint:
\begin{equation}\label{eq:ambiguity-set}
\mathcal{B}_\rho (\widehat{\mathbb{P}}_N) \coloneqq 
\Big\{\mathbb{Q} \in \mathcal{P}(\mathbb{R}^{D \times T}):
\inf_{\pi \in \Pi(\widehat{\mathbb{P}}_N, \mathbb{Q})} \mathbb{E}_{(\boldsymbol{X}, \boldsymbol{Z}) \sim \pi} \left[c_{\boldsymbol{X}}\left( \boldsymbol{Z} \right) \right] \leq \rho \Big\},
\end{equation}
where $\Pi(\widehat{\mathbb{P}}_N, \mathbb{Q})$ denotes the set of couplings between the empirical and candidate distributions, and $\rho \geq 0$ is the radius of the uncertainty set representing the transport budget.

\begin{remark}[Ambiguity Set Geometry]\label{remark:ambiguity-set}
We construct our ambiguity set \eqref{eq:ambiguity-set} using the Wasserstein metric centered on the empirical measure of the mean-imputed data for practical considerations specific to our imputation problem. First, centering the ambiguity set at the empirical distribution of the mean-imputed data $\{\overline{\boldsymbol{X}}^{(i)}\}$ allows exploration of the missing entries starting from the mean, subject to the transport budget $\rho$. Moreover, the empirical measure $\widehat{\mathbb{P}}_N$ is a discrete collection of point masses from the mean-imputed data, which are realizations from a potentially continuous ground-truth distribution $\mathbb{P}_{\mathrm{true}}$. With the Wasserstein ambiguity set definition, we explicitly accommodate continuity in the support of the learned worst-case distribution, effectively creating a continuous neighborhood around the empirical distribution. Finally, this formulation enables the imputer to learn the correlation structure of the distribution without being constrained by summary statistics (e.g., moments) that may be biased estimates of the ground truth.
\qed
\end{remark}
\textbf{Primal Objective.}
With the ambiguity set underlying the observed data, we now formulate the primal $\objname$ objective, which jointly minimizes (i) the reconstruction error of the imputer, and (ii) the distributional alignment between the worst-case data distribution and the imputer-generated distribution \textit{over ambiguity set}. To measure distributional discrepancy, we adopt the \textit{Unbalanced Sinkhorn Divergence}, a differentiable approximation of Wasserstein-2 distance that relaxes mass-matching constraints.

We define $\widehat{\mathbb{P}}_{\boldsymbol{\theta}}$ as the empirical distribution induced by applying the imputer $G_{\boldsymbol{\theta}}$ to the observed dataset, i.e., $\widehat{\mathbb{P}}_{\boldsymbol{\theta}} \coloneqq\sum_{i=1}^N \delta_{\widehat{\boldsymbol{X}}^{(i)}}/N$, where $\widehat{\boldsymbol{X}}^{(i)}$ is the $i$\textsuperscript{th} imputed sample. Then, our imputation objective is minimizing the combination of reconstruction error and the worst-case Unbalanced Sinkhorn Divergence between the imputer distribution and candidate distributions $\mathbb{Q}$ within the ambiguity set.  
Accordingly, we define our primal objective as:
\begin{equation}\label{eq:dri-objective}
    \min_{\boldsymbol{\theta}}\;
        \alpha R_{\boldsymbol{\theta}}
    + (1-\alpha)
    \sup_{\mathbb{Q} \in \mathcal{B}_\rho(\widehat{\mathbb{P}}_N)} S_{\epsilon, \tau}\left(\mathbb{Q}, \, \widehat{\mathbb{P}}_{\boldsymbol{\theta}}\right),
\end{equation}
where 
$R_{\boldsymbol{\theta}}$ represents point-wise reconstruction error, e.g., mean-squared error  normalized by the number of observed entries
$\sum_{i=1}^N \| \boldsymbol{M}^{(i)} \odot ({\boldsymbol{X}}^{(i)} - \widehat{\boldsymbol{X}}^{(i)}) \|_F^2 / \|\boldsymbol{M}^{(i)}\|_0$, where $\|\cdot\|_0$ denotes the matrix zero-norm (counting non-zero elements). Here, $\alpha \in [0,1]$ is the trade-off between point-wise fidelity and the distributional robustness of the imputer.
By simultaneously minimizing the reconstruction error on observed data and the worst-case divergence between the imputer distribution $\widehat{\mathbb{P}}_{\boldsymbol{\theta}}$ and the adversarial $\mathbb{Q}$, the imputer prevents overfitting to the biased empirical distribution from the observed data while maintaining the ability to impute the data point-wise.

\textbf{Computable Reformulation of Imputation Objective.}\label{sec:dual}
Solving \eqref{eq:dri-objective} is computationally intractable because it involves a supremum over an infinite-dimensional space of probability measures $\mathbb{Q}$. Now, we demonstrate that solving it can be reduced to solving a tractable penalized upper-bound representation, which is a minimax problem over \textit{deterministic adversarial sample trajectories}.
\begin{theorem}[Upper Bound for Worst-Case Alignment]
\label{thm:dual-formulation}
    Let $\mathcal{Z} = \{\boldsymbol{\zeta}^{(i)}\}_{i=1}^N \in \mathbb{R}^{N \times D \times T}$ be the batch of adversarial trajectories. Define $\widehat{\mathbb{Q}}_{\mathcal{Z}} \coloneqq \frac{1}{N} \sum_{i=1}^N \delta_{\boldsymbol{\zeta}^{(i)}}$ as the empirical adversary distribution. Then, for any $\gamma \geq 0$ the worst-case distributional alignment is upper-bounded:
    \begin{equation}
            \sup_{\mathbb{Q} \in \mathcal{B}_\rho(\widehat{\mathbb{P}}_N)} S_{\epsilon, \tau}\left(\mathbb{Q}, \, \widehat{\mathbb{P}}_{\boldsymbol{\theta}}\right)
            \leq 
            \gamma \rho + \sup_{\mathcal{Z} \in \mathbb{R}^{N \times D \times T}} \left( S_{\epsilon, \tau}\left(\widehat{\mathbb{Q}}_{\mathcal{Z}}, \widehat{\mathbb{P}}_{\boldsymbol{\theta}}\right)
        - \gamma\, C_{\mathcal{Z}}\right),
    \end{equation}
    where $C_{\mathcal{Z}}:= \sum_{i=1}^N c_{\overline{\boldsymbol{X}}^{(i)}}(\boldsymbol{\zeta}^{(i)}) / N $.
    
\end{theorem}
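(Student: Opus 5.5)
The plan is to chain weak Lagrangian duality for the transport budget with a convexity (Jensen) argument that replaces an arbitrary measure $\mathbb{Q}$ in the ball by a random $N$-atom empirical adversary $\widehat{\mathbb{Q}}_{\mathcal{Z}}$. The convexity of the unbalanced Sinkhorn divergence in its first argument is the one external ingredient.

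\textbf{Step 1 (penalization).} Fix $\gamma\ge 0$ and $\mathbb{Q}\in\mathcal{B}_\rho(\widehat{\mathbb{P}}_N)$. Since the ball is defined by an infimum over couplings, for any $\eta>0$ choose $\pi\in\Pi(\widehat{\mathbb{P}}_N,\mathbb{Q})$ with $\mathbb{E}_\pi[c]\le \rho+\eta$. Then
\[
S_{\epsilon,\tau}(\mathbb{Q},\widehat{\mathbb{P}}_{\boldsymbol{\theta}})\le \gamma(\rho+\eta)+\Big(S_{\epsilon,\tau}(\mathbb{Q},\widehat{\mathbb{P}}_{\boldsymbol{\theta}})-\gamma\,\mathbb{E}_\pi[c]\Big).
\]
Letting $\eta\to 0$ at the end, it therefore suffices to bound the bracket by $\sup_{\mathcal{Z}}\big(S_{\epsilon,\tau}(\widehat{\mathbb{Q}}_{\mathcal{Z}},\widehat{\mathbb{P}}_{\boldsymbol{\theta}})-\gamma C_{\mathcal{Z}}\big)$.

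\textbf{Step 2 (disintegration).} Because the first marginal $\widehat{\mathbb{P}}_N$ is uniform on the points $\overline{\boldsymbol{X}}^{(1)},\dots,\overline{\boldsymbol{X}}^{(N)}$, disintegrate $\pi$ as
\[
\pi=\tfrac1N\sum_i \delta_{\overline{\boldsymbol{X}}^{(i)}}\otimes \mathbb{Q}_i,
\]
where $\mathbb{Q}_i$ is the conditional law given the $i$-th atom. If some $\overline{\boldsymbol{X}}^{(i)}$ coincide, split the mass of the shared atom equally among the coinciding indices; this still gives a valid decomposition. Consequently
\[
\mathbb{Q}=\tfrac1N\sum_i\mathbb{Q}_i,\qquad \mathbb{E}_\pi[c]=\tfrac1N\sum_i \mathbb{E}_{\mathbb{Q}_i}\big[c_{\overline{\boldsymbol{X}}^{(i)}}\big].
\]

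\textbf{Step 3 (averaging over random adversaries).} Draw $\mathcal{Z}=(\boldsymbol{\zeta}^{(1)},\dots,\boldsymbol{\zeta}^{(N)})\sim\bigotimes_i\mathbb{Q}_i$ independently. Then:
\begin{itemize}[leftmargin=*]
\item By construction $\mathbb{E}_{\mathcal{Z}}[\widehat{\mathbb{Q}}_{\mathcal{Z}}]=\frac1N\sum_i\mathbb{Q}_i=\mathbb{Q}$ as a mixture of measures.
\item By linearity, $\mathbb{E}_{\mathcal{Z}}[C_{\mathcal{Z}}]=\mathbb{E}_\pi[c]$.
\end{itemize}
If $\mathbb{Q}\mapsto S_{\epsilon,\tau}(\mathbb{Q},\widehat{\mathbb{P}}_{\boldsymbol{\theta}})$ is convex (and lower semicontinuous, so Jensen applies to the infinite mixture), then
\[
S_{\epsilon,\tau}(\mathbb{Q},\widehat{\mathbb{P}}_{\boldsymbol{\theta}})-\gamma\,\mathbb{E}_\pi[c]
\;\le\;\mathbb{E}_{\mathcal{Z}}\Big[S_{\epsilon,\tau}(\widehat{\mathbb{Q}}_{\mathcal{Z}},\widehat{\mathbb{P}}_{\boldsymbol{\theta}})-\gamma C_{\mathcal{Z}}\Big]
\;\le\;\sup_{\mathcal{Z}}\Big(S_{\epsilon,\tau}(\widehat{\mathbb{Q}}_{\mathcal{Z}},\widehat{\mathbb{P}}_{\boldsymbol{\theta}})-\gamma C_{\mathcal{Z}}\Big).
\]
Taking the supremum over $\mathbb{Q}$ in the ball and combining with Step 1 yields the claim.

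\textbf{Main obstacle.} The crux is the convexity of the debiased unbalanced Sinkhorn divergence in one argument. This is not obvious: the subtracted term $-\tfrac12 W_{\epsilon,\tau}(\mathbb{Q},\mathbb{Q})$ is concave-looking in $\mathbb{Q}$. I would invoke the known result that $S_{\epsilon,\tau}$ is convex in each argument for a Gaussian-type kernel $e^{-\|\cdot\|^2/\epsilon}$ and KL marginal penalties (S\'ejourn\'e et al., 2019), which extends the balanced case of Feydy et al.

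Two technical points remain:
\begin{itemize}[leftmargin=*]
\item Measurability and integrability of $\mathcal{Z}\mapsto S_{\epsilon,\tau}(\widehat{\mathbb{Q}}_{\mathcal{Z}},\widehat{\mathbb{P}}_{\boldsymbol{\theta}})$, which should follow from continuity of the dual potentials.
\item Finiteness of the expectations, which follows because every $\mathbb{Q}$ in the ball has a finite second moment.
\end{itemize}
If the convexity citation proved insufficient, the fallback would be to replace $S_{\epsilon,\tau}$ by its dual representation as a supremum of functionals affine in $\mathbb{Q}$ and push the expectation inside.
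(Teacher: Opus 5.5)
Your proposal is correct and follows essentially the same route as the paper. Both arguments penalize the transport budget by weak duality at a fixed $\gamma$, disintegrate the coupling over the atoms $\overline{\boldsymbol{X}}^{(i)}$ of $\widehat{\mathbb{P}}_N$, and apply Jensen with independent draws $\boldsymbol{\zeta}^{(i)}\sim\mathbb{Q}_i$, using the convexity of $S_{\epsilon,\tau}(\cdot,\widehat{\mathbb{P}}_{\boldsymbol{\theta}})$ from S\'ejourn\'e et al.

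Your variations are minor and only tighten details the paper glosses over:
\begin{itemize}
\item You use an $\eta$-optimal coupling, where the paper reformulates the problem over couplings and argues $V_1=V_2$ via an optimal plan.
\item You prove only the ``$\le$'' half of the Dirac reduction, which is all the bound needs.
\item You handle coinciding atoms explicitly.
\item You flag the lower-semicontinuity and measurability that Jensen over an infinite mixture requires.
\end{itemize}
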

\begin{proof}[Proof Sketch]
We first rewrite the worst-case problem over distributions $\mathbb Q$
as an equivalent constrained maximization over couplings whose first
marginal is fixed to the empirical measure $\widehat{\mathbb P}_N$.
We then relax the hard transport-budget constraint $\rho$ by introducing
a Lagrange multiplier $\gamma \ge 0$. By weak duality, this yields a
conservative upper bound on the original worst-case alignment problem.

Since the empirical measure $\widehat{\mathbb P}_N$ is discrete, any
feasible coupling can be decomposed into conditional adversarial
distributions associated with each empirical sample. The resulting inner
objective is convex in these conditional distributions because the
Unbalanced Sinkhorn divergence is convex with respect to its input
measure \citep{sejourne2019sinkhorn} and the transport-cost penalty is linear. A Jensen argument then
shows that optimizing over general conditional distributions is
equivalent, in value, to optimizing over deterministic conditional
distributions, i.e., Dirac measures supported at adversarial trajectories
$\{\boldsymbol\zeta^{(i)}\}_{i=1}^N$. Therefore, the intractable search
over infinite-dimensional probability measures reduces to a
finite-dimensional adversarial search over the tensor
$\mathcal Z \in \mathbb R^{N\times D\times T}$.
Moreover, since the weak-duality bound holds for every fixed
$\gamma\ge0$, the corresponding fixed-penalty upper bound also holds for
any chosen $\gamma$. The full proof is provided in
Appendix~\S\ref{sec:theory-proof}.
\end{proof}

\begin{corollary}[Fixed-Penalty DRIO Surrogate]
\label{cor:fixed-gamma-drio}
Fix $\alpha\in[0,1]$, $\rho\ge 0$, $\epsilon>0$, and $\tau>0$.
Following \cref{thm:dual-formulation}, for any fixed $\gamma\ge0$ and
any imputer parameter $\boldsymbol\theta$,
\begin{equation}
\alpha R_{\boldsymbol{\theta}}
+
(1-\alpha)
\sup_{\mathbb{Q}\in\mathcal{B}_\rho(\widehat{\mathbb{P}}_N)}
S_{\epsilon,\tau}
\left(
\mathbb{Q},
\widehat{\mathbb{P}}_{\boldsymbol{\theta}}
\right)
\nonumber
\le
\alpha R_{\boldsymbol{\theta}}
+
(1-\alpha)
\left[
\gamma\rho
+
\sup_{\mathcal Z}
\left\{
S_{\epsilon,\tau}
\left(
\widehat{\mathbb Q}_{\mathcal Z},
\widehat{\mathbb P}_{\boldsymbol\theta}
\right)
-
\gamma C_{\mathcal Z}
\right\}
\right].
\end{equation}
Since $\gamma\rho$ is constant with respect to both
$\boldsymbol\theta$ and $\mathcal Z$, training with this fixed-penalty
upper-bound surrogate is equivalent to minimizing
\begin{equation}\label{eq:final-loss}
    \min_{\boldsymbol{\theta}}
    \;
    \alpha R_{\boldsymbol{\theta}}
    +
    (1-\alpha)
    \sup_{\mathcal{Z}}
    \left\{
    S_{\epsilon,\tau}
    \left(
    \widehat{\mathbb{Q}}_{\mathcal{Z}},
    \widehat{\mathbb{P}}_{\boldsymbol{\theta}}
    \right)
    -
    \gamma C_{\mathcal{Z}}
    \right\},
    \tag{\objname}
\end{equation}
\end{corollary}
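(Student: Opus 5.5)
The corollary follows almost directly from \cref{thm:dual-formulation}; my plan is to combine that bound with two elementary facts: monotonicity of the affine combination in $\alpha$, and invariance of minimizers under additive constants. Fix $\alpha\in[0,1]$, $\rho\ge0$, $\epsilon,\tau>0$, $\gamma\ge0$ and an arbitrary $\boldsymbol\theta$. Since $\widehat{\mathbb P}_{\boldsymbol\theta}$ is a fixed empirical measure once $\boldsymbol\theta$ is fixed, \cref{thm:dual-formulation} applies verbatim and gives
\[
\sup_{\mathbb{Q}\in\mathcal{B}_\rho(\widehat{\mathbb{P}}_N)} S_{\epsilon,\tau}(\mathbb{Q},\widehat{\mathbb{P}}_{\boldsymbol\theta}) \le \gamma\rho + \sup_{\mathcal Z}\bigl\{S_{\epsilon,\tau}(\widehat{\mathbb Q}_{\mathcal Z},\widehat{\mathbb P}_{\boldsymbol\theta}) - \gamma C_{\mathcal Z}\bigr\}.
\]
Because $1-\alpha\ge0$, I multiply both sides by $1-\alpha$ and add the common finite term $\alpha R_{\boldsymbol\theta}$. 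This preserves the inequality, including the extended-value case where the right-hand side is $+\infty$, in which the bound is trivial. That gives the displayed inequality pointwise in $\boldsymbol\theta$.

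For the equivalence claim, write $F(\boldsymbol\theta)$ for the right-hand side of the displayed inequality. Then $F(\boldsymbol\theta) = (1-\alpha)\gamma\rho + L(\boldsymbol\theta)$, where $L$ is the \eqref{eq:final-loss} objective. The quantity $\gamma\rho$ involves neither $\boldsymbol\theta$ nor $\mathcal Z$: $\rho$ is the fixed radius and $\gamma$ is fixed in advance. Hence $F$ and $L$ differ by a constant, so they have identical minimizers, identical gradients in $\boldsymbol\theta$, and identical inner maximizers in $\mathcal Z$. Training on $F$ and training on $L$ are therefore the same procedure.

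The only step needing care is the extended-real arithmetic. The inner supremum over $\mathcal Z$ could be $+\infty$, for instance if $\gamma=0$ and the unbalanced divergence is unbounded as $\boldsymbol\zeta^{(i)}\to\infty$. I would note that the inequality still holds trivially in that case. I would also note that adding the finite constant $(1-\alpha)\gamma\rho$ does not change which $\boldsymbol\theta$ attain the minimum, or whether the problem is degenerate. Apart from this, the argument is a routine monotonicity step and requires no new analysis beyond \cref{thm:dual-formulation}.
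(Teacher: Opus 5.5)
Your proof is correct and matches the paper's argument: apply \cref{thm:dual-formulation} at the fixed $\gamma$ (the paper phrases this as bounding the $\inf_{\gamma\ge0}$ form by its value at any $\gamma$), multiply by $1-\alpha\ge0$, add $\alpha R_{\boldsymbol\theta}$, and drop the constant $(1-\alpha)\gamma\rho$, which depends on neither $\boldsymbol\theta$ nor $\mathcal Z$. Your extended-real caveat is harmless but never arises: taking $\pi=0$ in $W_{\epsilon,\tau}$ (with the generalized KL) gives $0\le W_{\epsilon,\tau}\le\epsilon+2\tau$ on probability measures, so the inner supremum is always finite.
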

\Cref{thm:dual-formulation} provides a conservative upper-bound surrogate for the intractable worst-case alignment term. Specifically, by weak duality, the original supremum over probability measures is upper-bounded by a penalized adversarial objective over empirical adversarial trajectories $\mathcal Z$. The subsequent reduction shows that, after this upper-bound relaxation, the inner search over conditional adversarial distributions can be represented by deterministic adversarial samples, yielding the finite-dimensional objective \eqref{eq:final-loss} in Corollary~\ref{cor:fixed-gamma-drio}.

Importantly, this result does not imply that the fixed-$\gamma$ objective is equivalent to the original DRO objective, nor do we claim that the duality gap is zero. In fact, the exact duality gap is not directly computable here, since it would require solving the original infinite-dimensional primal worst-case problem, which is precisely the intractable problem the upper-bound relaxation is designed to avoid. Instead, DRIO follows a surrogate-optimization principle: we minimize a tractable objective that pointwise upper-bounds the worst-case alignment term. Furthermore, for any fixed $\gamma\ge0$, the additive term $\gamma\rho$ is constant with respect to both $\boldsymbol\theta$ and $\mathcal Z$, and therefore can be omitted from training. We then treat $\gamma$ as a penalty parameter controlling the effective robustness level and select it by validation. 

In \ref{eq:final-loss}, $\alpha$ controls the trade-off between reconstruction error and worst-case distributional alignment and $\gamma$ effectively controls the level of robustness. Larger $\gamma$ increases penalty on the adversary's transport cost, limiting the worst-case distribution to remain close to the empirical data. Conversely, smaller $\gamma$ allows exploring a broader neighborhood of the empirical distribution for imputation. 

\textbf{Training Imputer with $\objname$.} We develop an alternating learning procedure (Algorithm~\ref{alg:dro_training}) to train imputer $G_{\boldsymbol{\theta}}$ with the \ref{eq:final-loss} objective via two updates in each iteration. 
We detail these steps below:

{\em Step 1: Inner Maximization (Adversary Update).} 
In this step, we fix the imputer parameters $\boldsymbol{\theta}$ and search for the worst-case adversarial batch. Each adversary is initialized at the global mean of the observations, i.e., $\boldsymbol{\zeta}^{(i)}_{d,t} = \overline{x}_{d,t}$ for all samples $i$ in the training batch. Then, we update these adversaries by gradient ascent to increase the Sinkhorn divergence $S_{\epsilon, \tau}$, while simultaneously restricting it to the observed data via the ground cost penalty $\gamma C_{\mathcal{Z}}$. To ensure computational efficiency and stability, we fix the imputed data (thus $\widehat{\mathbb{P}}_{\boldsymbol{\theta}})$ and detach it from the computation graph to prevent updates to $\boldsymbol{\theta}$ during this step.

{\em Step 2: Outer Minimization (Imputer Update).} 
In the second step, we fix the discovered adversarial batch and update the imputer parameters $\boldsymbol{\theta}$ by gradient descent to minimize our imputation objective \eqref{eq:final-loss}, with fixed adversaries. The imputer thus learns to reconstruct the observations while forcing the generated trajectories to cover the support of the worst-case data manifold defined by the transport budget and the observed samples.

\renewcommand{\algorithmiccomment}[1]{\hfill{\scriptsize$\triangleright$ #1}}
\begin{algorithm}[t]
   \caption{Training MTS Imputer with $\objname$}
   \label{alg:dro_training}
\begin{algorithmic}[1]
   \STATE {\bfseries Input:} Dataset $\mathcal{D}$, trade-off $\alpha$, robustness $\gamma$, inner steps $K$, batch size $B$, learning rates $\eta_{\zeta}, \eta_{\theta}$
   \STATE {\bfseries Initialize:} Imputer parameters $\boldsymbol{\theta}$ randomly
   \FOR{each batch $\{\boldsymbol{X}_{\mathrm{obs}}^{(i)}, \boldsymbol{M}^{(i)}\}_{i=1}^B \sim \mathcal{D}$}
      \STATE Compute batch mean: $\overline{x}_{d,t} \gets \sum_{i=1}^{B}\boldsymbol{X}^{(i)}_{\mathrm{obs},d,t} / \sum_{j=1}^{B}\boldsymbol{M}^{(j)}_{d,t}$
      \STATE Generate imputation: $\widehat{\boldsymbol{X}}^{(i)} \gets \boldsymbol{X}^{(i)}_{\mathrm{obs}} + (\boldsymbol{1} - \boldsymbol{M}^{(i)}) \odot G_{\boldsymbol{\theta}}(\boldsymbol{X}^{(i)}_{\mathrm{obs}}, \boldsymbol{M}^{(i)})$
      \STATE Initialize adversary: $\mathcal{Z}_{0} \gets \{\boldsymbol{\zeta}^{(i)} : \boldsymbol{\zeta}^{(i)}_{d,t} = \overline{x}_{d,t}\}_{i=1}^B$
      \FOR{$k=1$ {\bfseries to} $K$}
         \STATE $J(\mathcal{Z}_{k-1}) \gets S_{\epsilon, \tau}\big(\widehat{\mathbb{Q}}_{\mathcal{Z}_{k-1}}, \widehat{\mathbb{P}}_{\boldsymbol{\theta}}\big) - \frac{\gamma}{B} \sum_{i=1}^B c_{\overline{\boldsymbol{X}}}(\boldsymbol{\zeta}^{(i)}_{k-1})$
      \COMMENT{Inner maximization with fixed $\boldsymbol{\theta}$}
         \STATE $\mathcal{Z}_{k} \gets \mathcal{Z}_{k-1} + \eta_{\zeta} \nabla_{\mathcal{Z}} J(\mathcal{Z}_{k-1})$
      \ENDFOR
      
      \STATE Re-generate imputation: $\widehat{\boldsymbol{X}}^{(i)} \gets \boldsymbol{X}^{(i)}_{\mathrm{obs}} + (\boldsymbol{1} - \boldsymbol{M}^{(i)}) \odot G_{\boldsymbol{\theta}}(\boldsymbol{X}^{(i)}_{\mathrm{obs}}, \boldsymbol{M}^{(i)})$
      \COMMENT{Outer minimization with fixed adversary trajectories}
        \STATE Compute loss: $\mathcal{L}(\boldsymbol{\theta}) \gets \alpha\,R_{\boldsymbol{\theta}} + (1-\alpha) S_{\epsilon, \tau}\big(\widehat{\mathbb{Q}}_{\mathcal{Z}_{K}}, \widehat{\mathbb{P}}_{\boldsymbol{\theta}}\big)$ 
        \COMMENT{The $-\gamma C_{Z_K}$ term is constant in $\theta$ and omitted in the outer update.}
      \STATE Update: $\boldsymbol{\theta} \gets \boldsymbol{\theta} - \eta_{\theta} \nabla_{\boldsymbol{\theta}}\mathcal{L}(\boldsymbol{\theta})$
   \ENDFOR
\end{algorithmic}
\end{algorithm}

\textbf{Cross-Validation.} Hyperparameter tuning for imputation is challenging because ground-truth missing values are unavailable at deployment; thus, we select hyperparameters using reconstruction error on observed entries from a held-out validation set of unseen samples.
This strategy is deployable as it requires no auxiliary masking and allows learning within-sample dependencies from all observed entries while generalizing across samples. The optimal $(\alpha^*, \gamma^*)$ is selected by minimizing validation loss on these held-out samples. Detailed cross validation procedure and deployability analyses are provided in Appendix~\ref{sec:hyperparams} and \ref{sec:cv_compare}, respectively.

%% file: numerical.tex
\section{Numerical Experiments}
\label{sec:numerical}
We conduct comprehensive experiments to answer three questions:
\textbf{(a)} How does the proposed objective~\eqref{eq:final-loss} perform across diverse datasets and missing mechanisms?
\textbf{(b)} What are the benefits of our design choices?
\textbf{(c)} How effective is the MTS data imputed by $\objname$?
Due to space constraints, detailed experimental settings and extended performance comparison results with sensitivity analyses are provided in Appendix~\ref{sec:experiment-details}. 
Code is available at: \url{https://github.com/CheYiLiao/DRIO}.

\textbf{Datasets.} 
We evaluate on seven commonly used publicly available MTS datasets spanning healthcare, manufacturing, transportation, and environmental monitoring (Table~\ref{tab:datasets}), with sample sizes $N \in [165, 4000]$, features $D \in [3, 78]$, and sequence lengths $T \in [48, 207]$, providing a comprehensive benchmark for assessing imputation performance on distinct real-data scenarios. To evaluate imputers' performances, we introduce artificial missingness for each dataset then partition them into train/validation/test splits (70\%/10\%/20\%). 

\begin{table}[thb]
\centering
\caption{Dataset summary. All datasets are presented as $(N, D, T)$ tensors representing samples $\times$ features $\times$ time steps.}
\label{tab:datasets}
\setlength{\tabcolsep}{1.2pt}
\begin{adjustbox}{width=1\linewidth}
\begin{tabular}{@{}l@{}cccp{11cm}}
\toprule
\textbf{Dataset} & \textbf{N} & \textbf{D} & \textbf{T} & \textbf{Description} \\
\midrule
CNNpred & 165 & 78 & 60 & Quarters (5 US indices combined) $\times$ market indicators $\times$ trading days \\
PEMS08 & 170 & 3 & 62 & Sensors $\times$ (flow, occupancy, speed) $\times$ days \\
PM2.5 & 260 & 7 & 168 & Weeks $\times$ meteorological features $\times$ hours \\
GasSensor & 290 & 16 & 150 & Experiment chunks (58 exp $\times$ 5 chunks) $\times$ sensors $\times$ time steps (60s per chunk) \\
CMAPSS & 359 & 21 & 207 & Engines (with $\geq 207$  cycles from 4 engines) $\times$ sensor measurements $\times$ cycles \\
HAR & 2947 & 9 & 128 & Activity windows $\times$ signals $\times$ time steps \\
PhysioNet & 4000 & 35 & 48 & ICU patients $\times$ clinical variables $\times$ hours \\
\bottomrule
\end{tabular}
\end{adjustbox}
\end{table}

\textbf{Missing Data Mechanisms.} We consider both MCAR and MNAR scenarios under three missing ratios (10\%, 50\%, and 90\%), applied on observed entries. For MCAR, we uniformly mask a fraction of entries independent of their values. For MNAR, we simulate a realistic pattern where extreme values are more likely to be missing \citep{bollinger2014trouble, pham2022missing}. Specifically, each entry $\boldsymbol{X}^{(i)}_{d,t}$ is assigned a missing probability proportional to $\Phi(|z_{d,t}^{(i)}|)$, where $z_{d,t}^{(i)} = (\boldsymbol{X}^{(i)}_{d,t} - \overline{\boldsymbol{x}}_d) / \sigma_d$ is the z-score computed from \textit{feature-wise} mean $(\overline{\boldsymbol{x}}_d)$ and standard deviation $(\sigma_d)$ over the entire dataset, and $\Phi$ is the standard normal CDF. This ensures entry values farther from the feature mean have higher missingness probability.

\textbf{Benchmarks.} 
We compare against two simple baselines and eight representative MTS imputation methods, covering diffusion-based imputers, modern deep learning backbones, and optimal-transport-based methods. 
For baselines, \textbf{Mean} imputes each missing entry with the sample mean at the corresponding feature--time position, i.e., $\bar{\boldsymbol{x}}_{d,t}$; \textbf{MF} (Matrix Factorization) performs low-rank completion after flattening the temporal and feature dimensions. For diffusion-based methods, \textbf{CSDI}~\citep{tashiro2021csdi} uses conditional score-based diffusion models, treating observed values as conditioning information and generating missing values through iterative denoising. \textbf{SSSD}~\citep{alcaraz2022diffusion} combines diffusion-based imputation with structured state-space models to better capture temporal dynamics in time series.
For methods with advanced architecture, \textbf{BRITS}~\citep{cao2018brits} is a bidirectional recurrent imputer that treats missing values as latent variables and combines reconstruction loss with forward--backward consistency. \textbf{SAITS}~\citep{du2023saits} is a self-attention-based imputer designed specifically for time series, using attention blocks to learn temporal dependencies from partially observed sequences. \textbf{ImputeFormer}~\citep{nie2024imputeformer} is a Transformer-based MTS imputer that models high-dimensional temporal dependencies with an attention architecture. \textbf{notMIWAE}~\citep{ipsen2020not} extends variational autoencoders to MNAR settings by jointly modeling the data distribution and the missingness mechanism.
For OT-based methods, \textbf{MDOT}~\citep{muzellec2020missing} minimizes Wasserstein distance for matrix completion; following its tabular-data formulation, we flatten the temporal and feature dimensions. \textbf{PSW}~\citep{wang2025optimal} uses proximal Sinkhorn distances for time-series imputation and is applied to each sample trajectory. 

\textbf{Metrics.}
We employ two complementary metrics on the test set: point-wise reconstruction accuracy and joint distributional alignment. For reconstruction accuracy, we report mean-square error (\textbf{MSE}) computed strictly on \textit{artificially held-out entries} where ground truth is available. MSE measures whether the imputer recovers the missing values accurately at the entry level.
For distributional alignment, we report squared maximum mean discrepancy (\textbf{MMD}$^2$) between the empirical distributions of imputed and ground-truth masked trajectory vectors. MMD$^2$ measures whether the imputed trajectories preserve the joint feature-temporal distribution of the true held-out values. Due to space limits, formal definitions and implementation details are provided in Appendix~\ref{sec:error-metric}.

\paragraph{Results.}
To answer the questions in \S\ref{sec:numerical}, we present results by imputation performance, ablation studies, and downstream task performance.
Note that all data splits are normalized using entry-wise mean and standard deviation from the training set to ensure fair comparison and meaningful metrics reporting.

\textbf{Imputation Performance.}
Table~\ref{tab:wide_mnar} reports results under the more challenging MNAR setting. Overall, $\objname$ achieves the most consistent performance across point-wise accuracy (MSE) and joint feature--temporal distributional alignment (MMD$^2$), ranking among the top three methods for both metrics on all seven datasets. While CSDI, SAITS, ImputeFormer, and PSW perform strongly on selected datasets, their gains are less uniform across metrics and datasets. Moreover, compared with the MCAR setting (\cref{tab:appendix_mcar_mse_mmd_stacked} in Appendix~\ref{sec:more_imputation_performance}), $\objname$ exhibits the smallest performance degradation from MCAR to MNAR, showing the benefit of incorporating distributional uncertainty into the imputation procedure. Together, these results suggest that $\objname$ provides stable and robust reconstruction under distributional uncertainty and systematic missingness. Additional evaluations, including frequency-domain Wasserstein distance and one-dimensional Wasserstein distance, further support $\objname$'s ability to achieve robust distributional alignment for imputation (See  Appendix~\ref{sec:more_imputation_performance}).

\begin{table*}[thb]
\centering
\caption{
Per-dataset MSE and MMD$^2$ under MNAR, averaged across missing ratios.  \textbf{Bold}, \underline{underline}, and \textit{italic} denote the best, second-best, and third-best results, respectively. $\objname$ is paired with the SAITS backbone. All training, validation, and testing samples are normalized using training data mean and variance for each dataset before imputation. Abbreviations: CNN = CNNpred, PeMS = PeMS08, Gas = GasSensor, CMAP = CMAPSS, Physio = PhysioNet, IF = ImputeFormer, and nMW = notMIWAE. Full results with standard deviations are reported in Table~\ref{tab:appendix_mnar_mse_mmd_stacked} in Appendix~\ref{sec:more_imputation_performance}.}
\label{tab:wide_mnar}
\begin{threeparttable}
\begin{adjustbox}{width=\linewidth}
\begin{tabular}{l c c c c c c c |c c c c c c c}
\toprule
& \multicolumn{7}{c}{MSE (point-wise reconstruction, lower better)} 
& \multicolumn{7}{c}{MMD$^2$ (joint distributional alignment, lower better)} \\
\cmidrule(lr){2-8} \cmidrule(lr){9-15}
Method 
& CNN & PeMS & PM2.5 & Gas & CMAP & HAR & Physio
& CNN & PeMS & PM2.5 & Gas & CMAP & HAR & Physio \\
\midrule
\multicolumn{15}{l}{\textit{Baselines}} \\
Mean & 1.350 & 1.203 & 1.489 & 1.402 & 1.155 & 1.234 & 1.351 & 0.208 & 0.255 & 0.231 & 0.295 & 0.201 & 0.184 & 0.177 \\
MF & 1.670 & 2.368 & 1.763 & 0.565 & 1.164 & 1.895 & 2.258 & 0.202 & 0.132 & 0.179 & 0.121 & 0.175 & 0.116 & 0.065 \\
\midrule
\multicolumn{15}{l}{\textit{Benchmarks}} \\
CSDI & 1.232 & 47.850 & 1.078 & 1.164 & \textit{0.136} & \textit{0.287} & \textbf{0.713} & 0.122 & 0.171 & 0.144 & \textbf{0.000} & 0.041 & \textit{0.014} & \textbf{0.021} \\
SSSD & 13.309 & 0.596 & 0.996 & 2.446 & 1.904 & 0.379 & 3.819 & 0.193 & 0.052 & 0.069 & 0.058 & 0.025 & 0.048 & 0.051 \\
BRITS & \textit{0.764} & 0.544 & 0.913 & 0.085 & 0.360 & 0.457 & 0.868 & 0.123 & 0.161 & 0.145 & 0.009 & 0.087 & 0.143 & 0.137 \\
SAITS & \underline{0.680} & \textbf{0.250} & \textit{0.874} & 0.030 & \underline{0.104} & \underline{0.260} & 0.775 & \textit{0.102} & \underline{0.015} & \underline{0.050} & 0.002 & \underline{0.006} & \underline{0.005} & \underline{0.026} \\
IF & 0.884 & \textit{0.288} & 1.008 & \textbf{0.012} & 0.160 & 0.424 & \textit{0.719} & 0.104 & \textit{0.015} & 0.068 & \textit{0.000} & \textit{0.012} & 0.023 & 0.049 \\
nMW & 2.232 & 2.828 & 2.999 & 1.280 & 0.648 & 2.155 & 7.830 & 0.291 & 0.353 & 0.331 & 0.268 & 0.153 & 0.315 & 0.230 \\
MDOT & 0.979 & 0.652 & 1.167 & 0.411 & 0.789 & 0.772 & 1.131 & 0.179 & 0.179 & 0.195 & 0.164 & 0.145 & 0.146 & 0.164 \\
PSW & 1.313 & 0.376 & \textbf{0.422} & \underline{0.016} & 0.925 & 0.419 & \underline{0.716} & \textbf{0.051} & 0.033 & \textit{0.062} & \underline{0.000} & 0.047 & 0.045 & 0.029 \\
\midrule
\multicolumn{15}{l}{\textit{Ours}} \\
DRIO & \textbf{0.645} & \underline{0.252} & \underline{0.818} & \textit{0.024} & \textbf{0.097} & \textbf{0.256} & 0.773 & \underline{0.087} & \textbf{0.015} & \textbf{0.039} & 0.001 & \textbf{0.004} & \textbf{0.005} & \textit{0.026} \\
\bottomrule
\end{tabular}
\end{adjustbox}
\end{threeparttable}
\end{table*}

\textbf{Ablation Study.}
Table~\ref{tab:ablation} separates the effect of architecture and training objective. With the $\objname$ objective fixed, SAITS achieves the best performance across both MCAR and MNAR, substantially outperforming standard MLP (Multi-Layer Perceptron), LSTM (Long Short-Term Memory) and STT (Spatiotemporal Transformer) that are not designed specifically for time series imputation. As SAITS is also attention-based, the results suggest that while $\objname$ can be paired with various deep learning architectures, it generally benefits from architectures designed for time series imputation.
With the SAITS architecture fixed, $\objname$ consistently improves over pure MSE, DRIO with balanced Sinkhorn (B-DRIO), and the original SAITS procedure with internal masking. We note that $\objname$ acts as a regularizer preventing overfitting to observed entries, which shares similar idea with internal masking used in original SAITS imputer. That said, $\objname$ does so through explicit worst-case distributional uncertainty rather than random masking. Additionally, the improvement over B-DRIO further supports the use of unbalanced Sinkhorn divergence, especially under MNAR, as strict mass conservation can be too restrictive under support mismatch. Overall, these results indicate that $\objname$'s gains come from both a suitable imputation backbone and its distributionally robust objective.

\begin{table}[t]
\centering
\caption{Ablation on (i) imputation architecture with the DRIO objective fixed (left), and (ii) training objective with the SAITS architecture fixed (right). Each cell reports mean (std) across datasets and missing ratios. B-DRIO is \ref{eq:final-loss} with the balanced Sinkhorn variant. MAE+Internal Masking is the original SAITS procedure (MAE from both observed entries and  additional internal masks).}
\label{tab:ablation}
\begin{adjustbox}{width=\linewidth}
\begin{tabular}{l l c c c c c c c}
\toprule
& & \multicolumn{4}{c}{\textit{Architecture (DRIO obj.)}} & \multicolumn{3}{c}{\textit{Objective (SAITS arch.)}} \\
\cmidrule(lr){3-6} \cmidrule(lr){7-9}
Mech. & Metric & MLP & LSTM & STT & SAITS & B-DRIO & MSE & MAE+Internal Masking \\
&  &  &  &  & &  & & (SAITS-orig.) \\
\midrule
\multirow{2}{*}{MCAR} & MSE & 0.835 {\tiny (0.549)} & 0.614 {\tiny (0.452)} & 8.171 {\tiny (17.565)} & \textbf{0.326} {\tiny (0.368)} & 0.361 {\tiny (0.379)} & 1.271 {\tiny (0.332)} & 0.343 {\tiny (0.382)} \\
 & MMD$^2$ & 0.148 {\tiny (0.111)} & 0.106 {\tiny (0.107)} & 0.275 {\tiny (0.196)} & \textbf{0.020} {\tiny (0.029)} & 0.041 {\tiny (0.067)} & 0.242 {\tiny (0.095)} & 0.024 {\tiny (0.035)} \\
\midrule
\multirow{2}{*}{MNAR} & MSE & 0.876 {\tiny (0.511)} & 0.687 {\tiny (0.459)} & 23.181 {\tiny (72.544)} & \textbf{0.367} {\tiny (0.349)} & 0.469 {\tiny (0.421)} & 1.463 {\tiny (0.322)} & 0.388 {\tiny (0.375)} \\
 & MMD$^2$ & 0.161 {\tiny (0.118)} & 0.126 {\tiny (0.117)} & 0.282 {\tiny (0.210)} & \textbf{0.029} {\tiny (0.046)} & 0.078 {\tiny (0.105)} & 0.253 {\tiny (0.086)} & 0.035 {\tiny (0.056)} \\
\bottomrule
\end{tabular}
\end{adjustbox}
\end{table}

\textbf{Downstream Tasks.}
Figure~\ref{fig:downstream} summarizes downstream forecast MSE using the imputed time series; details are provided in Appendix~\ref{sec:downstream_task}. Methods are ordered by mean MSE (marked with red bars), while boxplots show the median and interquartile, computed across all datasets and missing ratios.

Overall, $\objname$ achieves the lowest mean downstream MSE under both MCAR and MNAR. Under MCAR, $\objname$ obtains mean MSE $0.486$, improving over the strongest non-$\objname$ baseline, PSW ($0.519$). Under MNAR, $\objname$ obtains mean MSE $0.550$, slightly improving over ImputeFormer ($0.563$). Although the gains are more modest than the direct imputation improvements, the consistent ranking across missing mechanisms suggest that $\objname$'s benefit also transfers to downstream predictive utility. Moreover, the boxplots indicate that $\objname$ maintains a tighter distribution across scenarios, compared to other benchmarks. These results suggest that distributionally robust imputation indeed preserve predictive temporal structure under distributional uncertainty and systematic missingness.

\begin{figure}[thb]
\begin{center}
\centerline{\includegraphics[width=0.9\columnwidth]{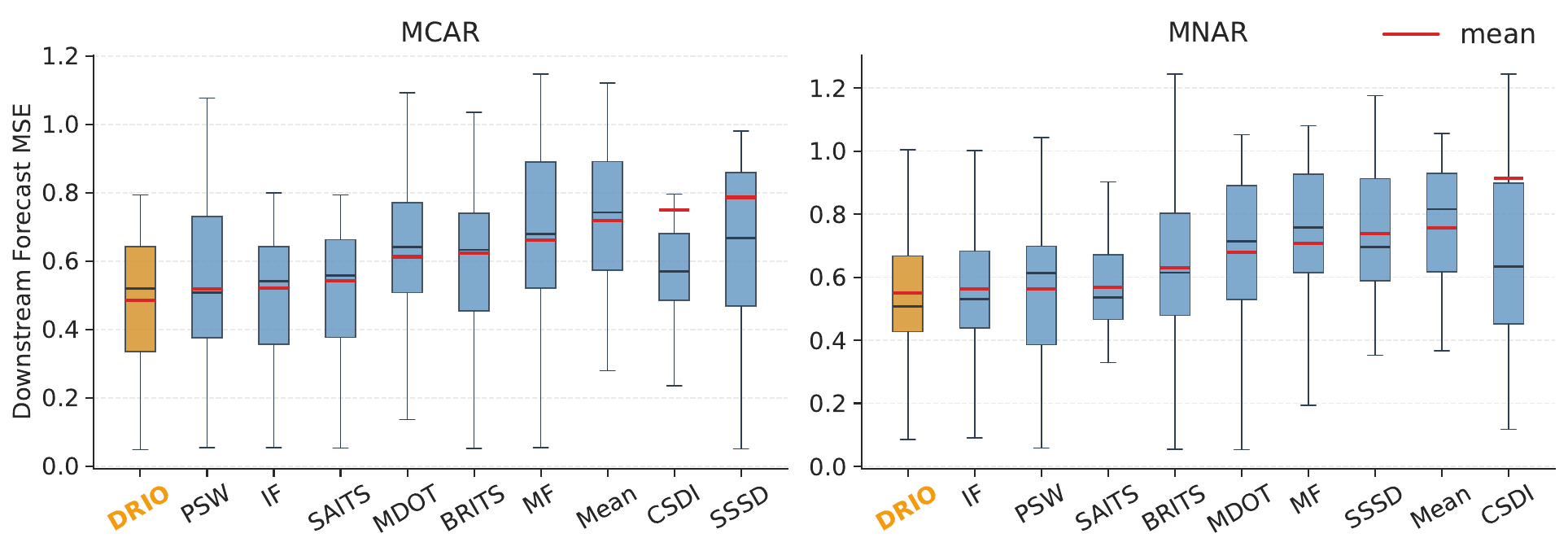}}
\caption{Downstream forecasting MSE using imputed data. Each box aggregates scenario-level forecast MSE across datasets and missing ratios under the corresponding missing mechanism; red bars denote means. Methods are ordered by mean forecast MSE (red bars).}
\label{fig:downstream}
\end{center}
\end{figure}

%% file: conclusion.tex
\section{Limitations}\label{sec:limitations}
Our work has two main limitations. First, our tractable formulation minimizes a weak-duality upper bound of the worst-case objective, and we do not guarantee that the duality gap is closed. However, controlling this conservative upper bound still hedges against the original worst-case risk, and cross-validated results show consistent gains in both imputation and downstream forecasting across missingness settings. Second, $\objname$ requires alternating adversarial updates with Sinkhorn computation, which increases \textit{training} cost relative to standard reconstruction-only objectives. Nevertheless, this overhead is practically manageable in our experiments, with most training runs completed within minutes (see Appendix~\ref{sec:run_time}), and inference cost is unchanged because adversarial updates are not used after training.

\section{Conclusion}
We propose $\objname$, a distributionally robust regularized objective for multivariate time series imputation under non-stationarity and systematic missingness. By hedging against worst-case distributional shifts within a Wasserstein ambiguity set, $\objname$ complements point-wise reconstruction with robust distributional regularization. We derive a tractable adversarial training formulation and show that it can be paired with modern imputation backbones. Experiments across diverse datasets demonstrate that $\objname$ improves robustness across missingness scenarios and suggests improved downstream forecasting utility. These results suggest that explicitly modeling distributional uncertainty is a promising direction for reliable time series imputation.

%% file: appendix.tex
\newpage
\appendix
\section{Loss Function Construction} \label{sec:discuss-loss}
We provide a comprehensive discussion on the Sinkhorn divergence used in our formulation \eqref{eq:dri-objective}.

Let $\mathcal{Z} = \mathbb{R}^{D \times T}$ denote the feature-temporal space of the observed data dimension. For any two probability measures $\mu, \nu \in \mathcal{P}(\mathcal{Z})$, the entropic unbalanced transport cost $W_{\epsilon, \tau}$ is defined as the solution to the minimization problem over all positive transport plans $\pi$:
\begin{equation}
    W_{\epsilon, \tau}(\mu, \nu) \coloneqq \inf_{\pi \ge 0}\left\{ \int_{\mathcal{Z}^2} d(\boldsymbol{x}, \boldsymbol{z}) \, \mathrm{d}\pi(\boldsymbol{x}, \boldsymbol{z})+ \epsilon \text{KL}(\pi | \mu \otimes \nu) + \tau ( \text{KL}(\pi_1 | \mu) + \text{KL}(\pi_2 | \nu) ) \right \},
\end{equation}
where $\pi$ is a positive Radon measure on $\mathcal{Z} \times \mathcal{Z}$ with marginals $\pi_1$ and $\pi_2$, $d(\boldsymbol{x}, \boldsymbol{z}) = \|\boldsymbol{x} - \boldsymbol{z}\|_F^2$ is the ground metric, and $\text{KL}(\cdot|\cdot)$ denotes the Kullback-Leibler divergence. Here, $\epsilon > 0$ is the entropic regularization coefficient that smooths the transport plan, and $\tau > 0$ is the marginal relaxation parameter that allows creating and destroying local mass.

The \textbf{Unbalanced Sinkhorn Divergence} is then defined via the debiasing formula, i.e., 
\begin{equation}
    S_{\epsilon, \tau}(\mu, \nu) \coloneqq W_{\epsilon, \tau}(\mu, \nu) - \frac{1}{2} \left( W_{\epsilon, \tau}(\mu, \mu) + W_{\epsilon, \tau}(\nu, \nu) \right).
\end{equation}
Note that the \textbf{Balanced Sinkhorn Divergence} is recovered as $\tau \to \infty$, which enforces strict marginal constraints. In this case, the transport plan $\pi$ must belong to the set of couplings $\Pi(\mu, \nu) = \{\pi \geq 0 : \pi_1 = \mu, \pi_2 = \nu\}$, and the entropic transport cost simplifies to:
\begin{equation}
    W_{\epsilon}(\mu, \nu) \coloneqq \inf_{\pi \in \Pi(\mu, \nu)} \left\{ \int_{\mathcal{Z}^2} d(\boldsymbol{x}, \boldsymbol{z}) \, \mathrm{d}\pi(\boldsymbol{x}, \boldsymbol{z}) + \epsilon \, \text{KL}(\pi \mid \mu \otimes \nu) \right\}.
\end{equation}
The balanced Sinkhorn divergence is then defined as:
\begin{equation}
S_{\epsilon}(\mu, \nu) \coloneqq W_{\epsilon}(\mu, \nu) - \frac{1}{2} ( W_{\epsilon}(\mu, \mu) + W_{\epsilon}(\nu, \nu) )
\end{equation}
In our ablation study and sensitivity analysis,  analyze the effect of marginal relaxation and confirmed its benefit in MTS imputation (See \cref{sec:numerical} and Appendix~\ref{sec:tau_sensitivity}).

Now, we expand our discussions on the design choices with respect to the unbalanced Sinkhorn divergence used in this work:

\paragraph{Why Wasserstein-based Divergence?}
We employ an Optimal Transport (Wasserstein-based) loss rather than classical divergences because standard metrics do not fully capture the geometry of distributional support. Point-wise metrics such as MSE focus on entry-wise fidelity and can blur predictions in high-uncertainty regions, while likelihood-based divergences such as KL may suffer from vanishing or ill-defined gradients when the model and data distributions have weakly overlapping supports~\citep{wang2025optimal,villani2008optimal}. In contrast, OT compares distributions through the cost of transporting mass, providing meaningful learning signals even when supports are disjoint.

\paragraph{Why Sinkhorn Divergence?}
While Wasserstein geometry is desirable, computing the exact Wasserstein distance requires solving a network-flow linear program with super-cubic complexity, e.g., $O(n^3\log n)$, whose optimal value is generally non-smooth with respect to the imputed samples $\widehat{\boldsymbol{X}}^{(i)}$ and therefore with respect to the imputer parameters $\boldsymbol{\theta}$. This makes exact Wasserstein distance difficult to use inside the inner loop of a deep learning framework. We therefore use Sinkhorn divergence as a smooth Wasserstein-based surrogate. With entropic regularization, the transport problem becomes strongly convex in the transport plan and can be solved efficiently by Sinkhorn iterations with $O(n^2)$ cost per iteration~\citep{villani2008optimal,wang2025sinkhorn}. Crucially, the resulting objective is differentiable and GPU-parallelizable, allowing stable backpropagation to the imputer parameters $\boldsymbol{\theta}$.

In principle, one could work with the exact Wasserstein distance and obtain gradients or subgradients through Lagrangian, adjoint-sensitivity, or envelope-theorem arguments by differentiating through the optimal plan or dual potentials \citep{border2015miscellaneous}. However, this does not remove the minimax structure of our distributionally robust formulation in \cref{eq:dri-objective}, which still necessitate our alternating update in Algorithm~\ref{alg:dro_training}. Moreover, because exact Wasserstein distance is the value of a linear program, the optimizer may be non-unique and the active transport plan can change abruptly as $\boldsymbol{\theta}$ varies. The resulting derivative is therefore generally a subgradient of a non-smooth value function, which can be unstable and costly when repeatedly used in the inner loop.

For these reasons, Sinkhorn divergence provides a more practical choice for our setting: it preserves the geometric alignment benefits of OT while yielding smooth, stable, and scalable gradients. We further use the unbalanced version to relax strict mass conservation, which is important when systematic missingness induces support mismatch between the observed empirical distribution and the true data-generating process.

\paragraph{Why Unbalanced Sinkhorn Divergence?}
We employ the \textit{unbalanced} formulation specifically to mitigate the systematic support bias caused by the combined effects of time series non-stationarity and systematic missing mechanisms.
Standard Wasserstein distances (and balanced Sinkhorn) enforce strict conservation of mass, requiring the integral of the imputed distribution to exactly match that of the empirical reference. When the observed data has truncated support (e.g., missing tails), this constraint distort the probability mass from the reliable observed to fill the unobserved voids.
In contrast, the Unbalanced Sinkhorn Divergence relaxes the marginal constraints via the parameter $\tau$. This allows the transport plan to create or destroy mass locally, which is critical for the designed non-stationary environments, as it allows the imputer $G_{\boldsymbol{\theta}}$ to instantaneously create new modes, with a penalty, without strictly following the global mass balance constraint \cite{sejourne2019sinkhorn, choi2023generative}.

\section{Theoretical Result}\label{sec:theory-proof}
We provide detailed proofs for Theorem~\ref{thm:dual-formulation} and Corollary~\ref{cor:fixed-gamma-drio} in this section.
\begin{proof}[Proof of \cref{thm:dual-formulation}]\hfill

By the definition of the ambiguity set \eqref{eq:ambiguity-set}, the worst-case divergence problem can be rewritten as:
\begin{equation}
\begin{aligned}
\label{eq:rewrite-primal-1}
    \sup_{\mathbb{Q} \in \mathcal{P}(\mathbb{R}^{D \times T})} 
    & S_{\epsilon, \tau}\left(\mathbb{Q}, \widehat{\mathbb{P}}_{\boldsymbol{\theta}}\right) \\ 
    \text{subject to}\;\quad\;
    & \inf_{\pi \in \Pi(\widehat{\mathbb{P}}_N, \mathbb{Q})} \mathbb{E}_{(\boldsymbol{X},\boldsymbol{Z}) \sim \pi} \left[c_{\boldsymbol{X}}\left( \boldsymbol{Z} \right) \right] \leq \rho,
\end{aligned}
\end{equation}
where $\rho \geq 0$.
To derive the dual formulation, we aim to establish that the nested maximization over distributions $\mathbb{Q}$ and the inner minimization over couplings $\pi$ can be unified into a single constrained maximization. Specifically, we first define the maximization problem over the coupling $\pi$:
\begin{equation}
\begin{aligned}\label{eq:rewrite-primal-2}
    \sup_{\substack{\pi \in \mathcal{P}(\mathbb{R}^{D \times T} \times \mathbb{R}^{D \times T}) }} 
    &S_{\epsilon, \tau}\left(\pi_{\boldsymbol{Z}},\widehat{\mathbb{P}}_{\boldsymbol{\theta}}\right) 
    \\
    \text{subject to}\;\quad\;
    &\mathbb{E}_{(\boldsymbol{X},\boldsymbol{Z}) \sim \pi} \left[c_{\boldsymbol{X}}\left( \boldsymbol{Z} \right) \right] \leq \rho\\
    & \pi_{\boldsymbol{X}} = \widehat{\mathbb{P}}_N
\end{aligned}
\end{equation}
where $\pi_{\boldsymbol{X}}$ and $\pi_{\boldsymbol{Z}}$ represent the first and the second marginal of the coupling $\pi$, respectively. Now, we show that  \ref{eq:rewrite-primal-1} is equivalent to \ref{eq:rewrite-primal-2}.

Let $V_1$ denote the optimal value of the original problem \eqref{eq:rewrite-primal-1} and $V_2$ denote the optimal value of the reformulated problem \eqref{eq:rewrite-primal-2}.
First, we start with problem \eqref{eq:rewrite-primal-2}. Consider a feasible coupling $\pi$ that satisfies the marginal constraint $\pi_{\boldsymbol{X}} = \widehat{\mathbb{P}}_N$ and the budget constraint. Then, the following relationship holds by definition:
\begin{equation}
\rho
\geq
\mathbb{E}_{(\boldsymbol{X},\boldsymbol{Z}) \sim \pi} \left[c_{\boldsymbol{X}}\left( \boldsymbol{Z} \right) \right] 
\geq
\inf_{\pi \in \Pi(\widehat{\mathbb{P}}_N, \mathbb{Q})} \mathbb{E}_{(\boldsymbol{X},\boldsymbol{Z}) \sim \pi} \left[c_{\boldsymbol{X}}\left( \boldsymbol{Z} \right) \right],
\end{equation}
where $\mathbb{Q} \in \mathcal{P}(\mathbb{R}^{D\times T})$.
Thus, any solution to \eqref{eq:rewrite-primal-2} is feasible in 
\eqref{eq:rewrite-primal-1}, which implies that $V_2 \leq V_1$.

Conversely, consider problem \eqref{eq:rewrite-primal-1}. By definition, the minimum transport cost from $\widehat{\mathbb{P}}_N$ to any distribution $\mathbb{Q}$ in the ambiguity set is at most $\rho$. Let $\pi^*$ be a coupling representing the optimal transport plan that achieves this minimum cost. Then $\pi^*$ satisfies $\mathbb{E}_{(\boldsymbol{X},\boldsymbol{Z}) \sim \pi^*} \left[c_{\boldsymbol{X}}\left( \boldsymbol{Z} \right) \right]  \le \rho$ with the first marginal $\pi_{\boldsymbol{X} 
}=\widehat{\mathbb{P}}_N$. Consequently, $\pi^*$ is a feasible solution to \eqref{eq:rewrite-primal-2}, implying $V_1 \leq V_2$. This establishes the equivalence between \eqref{eq:rewrite-primal-1} and \eqref{eq:rewrite-primal-2}.

Now, we introduce a Lagrange multiplier $\gamma \geq 0$ to \eqref{eq:rewrite-primal-2} to relax the transport budget constraint, which yields:
\begin{equation}
\label{eq:rewrite-primal-3}
    \sup_{\pi: \pi_{\boldsymbol{X}} = \widehat{\mathbb{P}}_N} \inf_{\gamma \ge 0} \Bigg\{ S_{\epsilon, \tau}\left(\pi_{\boldsymbol{Z}}, \widehat{\mathbb{P}}_{\boldsymbol{\theta}}\right) + \gamma \rho - 
    \gamma \mathbb{E}_{(\boldsymbol{X},\boldsymbol{Z}) \sim \pi} \left[c_{\boldsymbol{X}}\left( \boldsymbol{Z} \right) \right] \Bigg\}.
\end{equation}
By rearranging the terms inside the objective and exchanging the order of supremum and infimum in \eqref{eq:rewrite-primal-3}, we obtain the dual formulation to the primal worst-case divergence problem:
\begin{equation}
\label{eqn:dual-swapped}
    \inf_{\gamma \ge 0} \left\{ \gamma \rho + \sup_{\pi: \pi_{\boldsymbol{X}} = \widehat{\mathbb{P}}_N} \left( S_{\epsilon, \tau}\left(\pi_{\boldsymbol{Z}}, \widehat{\mathbb{P}}_{\boldsymbol{\theta}}\right) - \gamma \mathbb{E}_{(\boldsymbol{X},\boldsymbol{Z}) \sim \pi} \left[c_{\boldsymbol{X}}\left( \boldsymbol{Z} \right) \right] \right) \right\}.
\end{equation}
By weak duality, the formulation \eqref{eqn:dual-swapped} is an upper bound on the primal objective:
\begin{equation}\label{eq:upbound}
                \sup_{\mathbb{Q} \in \mathcal{B}_\rho(\widehat{\mathbb{P}}_N)} S_{\epsilon, \tau}\left(\mathbb{Q}, \, \widehat{\mathbb{P}}_{\boldsymbol{\theta}}\right)
                \leq
    \inf_{\gamma \ge 0} \left\{ \gamma \rho + \sup_{\pi: \pi_{\boldsymbol{X}} = \widehat{\mathbb{P}}_N} \left( S_{\epsilon, \tau}\left(\pi_{\boldsymbol{Z}}, \widehat{\mathbb{P}}_{\boldsymbol{\theta}}\right) - \gamma \mathbb{E}_{(\boldsymbol{X},\boldsymbol{Z}) \sim \pi} \left[c_{\boldsymbol{X}}\left( \boldsymbol{Z} \right) \right] \right) \right\}
\end{equation}

We note that the Unbalanced Sinkhorn Divergence $S_{\epsilon, \tau}(\cdot, \widehat{\mathbb{P}}_{\boldsymbol{\theta}})$ is a convex functional with respect to the input measure. Consequently, the standard Minimax Theorem (which requires a concave-convex structure) does not guarantee strict equality for this swap of the infimum and supremum.
However, establishing \eqref{eqn:dual-swapped} as an upper bound is sufficient for our distributionally robust framework. From theoretical perspective, minimizing this dual objective guarantees that we minimize a conservative surrogate of the worst-case risk. 
Specifically, if the model is robust against this dual upper bound, it is by definition robust against the primal worst-case scenario. 
We now proceed to simplify the inner maximization of this upper bound. Although the outer duality gap exists, we show next that the inner search for the worst-case measure is exact.

\eqref{eqn:dual-swapped}. 
Note that in the inner maximization, the first marginal of $\pi$ is fixed to the discrete empirical distribution $\widehat{\mathbb{P}}_N = \frac{1}{N}\sum \delta_{\overline{\boldsymbol{X}}^{(i)}}$. Therefore, by the disintegration formula, any feasible coupling $\pi$ can be uniquely decomposed into a mixture of $N$ conditional distributions. That is, for any feasible $\pi$ in \eqref{eqn:dual-swapped}, we have
\begin{equation}\label{eq:pi-rewrite1}
\pi
=
\int \delta_{\boldsymbol{X}}\otimes \pi(\cdot|\boldsymbol{X}) \mathrm{d}\widehat{\mathbb{P}}_{N}(\boldsymbol{X})
=
\frac{1}{N} \sum_{i=1}^N \delta_{\overline{\boldsymbol{X}}^{(i)}}\otimes \pi^{(i)}(\boldsymbol{Z}),
\end{equation}
where $\pi^{(i)}(\boldsymbol{Z})$ denotes the conditional distribution of the adversary given sample $\overline{\boldsymbol{X}}^{(i)}$. Then, we have:
$$    
\mathrm{d}\pi(\boldsymbol{X}, \boldsymbol{Z}) = \frac{1}{N} \sum_{i=1}^N \delta_{\overline{\boldsymbol{X}}^{(i)}} \mathrm{d}\boldsymbol{X}\,\otimes \mathrm{d}\pi^{(i)}(\boldsymbol{Z}).
$$    
Consequently, the expected ground cost over the coupling $\pi$ in \eqref{eqn:dual-swapped} can be simplified to an expectation conditional on the samples. That is,
\begin{align}
\mathbb{E}_{(\boldsymbol{X},\boldsymbol{Z}) \sim \pi} \left[c_{\boldsymbol{X}}\left( \boldsymbol{Z} \right) \right]
&= 
\int c_{\boldsymbol{X}}(\boldsymbol{Z}) \, \mathrm{d}\pi(\boldsymbol{X}, \boldsymbol{Z})\\
&= 
\int c_{\boldsymbol{X}}(\boldsymbol{Z}) \, \left( \frac{1}{N} \sum_{i=1}^N \delta_{\overline{\boldsymbol{X}}^{(i)}}(\mathrm{d} \boldsymbol{X}) \otimes \mathrm{d}\pi^{(i)}(\boldsymbol{Z}) \right)\\
&=
\frac{1}{N} \sum_{i=1}^N \int_{\boldsymbol{Z}} \left( \int_{\boldsymbol{X}} c_{\boldsymbol{X}}(\boldsymbol{Z}) \, \delta_{\overline{\boldsymbol{X}}^{(i)}}(\mathrm{d}\boldsymbol{X})\right) \mathrm{d}\pi^{(i)}(\boldsymbol{Z})\\
&=
\frac{1}{N} \sum_{i=1}^N \int_{\boldsymbol{Z}} c_{\overline{\boldsymbol{X}}^{(i)}}(\boldsymbol{Z})
\;
\mathrm{d}\pi^{(i)}(\boldsymbol{Z}).\label{eq:rewrite-cost1}
\end{align}
Next, for the Sinkhorn term in \eqref{eqn:dual-swapped}, we  further rewrite the second marginal $\pi_{\boldsymbol{Z}}$ of the coupling $\pi$ by marginalization using results from \cref{eq:pi-rewrite1}:
\begin{align}
\pi_{\boldsymbol{Z}}
= 
\int \pi\; \mathrm{d}\boldsymbol{X}
&= 
\int \frac{1}{N} \sum_{i=1}^N \delta_{\overline{\boldsymbol{X}}^{(i)}}\otimes \pi^{(i)}(\boldsymbol{Z})\;\mathrm{d}\boldsymbol{X}\\
&= 
\frac{1}{N} \sum_{i=1}^N \int \delta_{\overline{\boldsymbol{X}}^{(i)}}\otimes \pi^{(i)}(\boldsymbol{Z})\;\mathrm{d}\boldsymbol{X}\\
&= \frac{1}{N} \sum_{i=1}^N \pi^{(i)}.\label{eq:rewrite-sinkhorn1}
\end{align}

Therefore, inserting \cref{eq:rewrite-cost1} and \cref{eq:rewrite-sinkhorn1} into \eqref{eqn:dual-swapped}, we know that maximizing over the joint coupling $\pi$ is equivalent to maximizing over the set of conditional distributions $\{\pi^{(i)}\}_{i=1}^N$, which implies that \eqref{eqn:dual-swapped} is equivalent to the following optimization problem:
\begin{equation}
    \inf_{\gamma \ge 0} \left\{ \gamma \rho + \sup_{\{\pi^{(i)}\}_{i=1}^N} \left( S_{\epsilon, \tau}\left(\frac{1}{N}\sum_{i=1}^N \pi^{(i)}, \widehat{\mathbb{P}}_{\boldsymbol{\theta}}\right) - \frac{\gamma}{N} \sum_{i=1}^N \int c_{\overline{\boldsymbol{X}}^{(i)}}(\boldsymbol{Z}) \, \mathrm{d}\pi^{(i)}(\boldsymbol{Z}) \right) \right\}.
\label{eqn:dual-swapped2}
\end{equation}
Next, we show that the search over conditional distributions
$\{\pi^{(i)}\}_{i=1}^N$ can be reduced to a search over deterministic
adversarial trajectories $\{\boldsymbol\zeta^{(i)}\}_{i=1}^N$. For fixed $\gamma\ge 0$, we define
\[
H_\gamma(\pi^{(1)},\ldots,\pi^{(N)})
:=
S_{\epsilon,\tau}
\left(
\frac1N\sum_{i=1}^N \pi^{(i)},
\widehat{\mathbb P}_{\boldsymbol\theta}
\right)
-
\frac{\gamma}{N}
\sum_{i=1}^N
\int
c_{\overline{\boldsymbol X}^{(i)}}(\boldsymbol Z)
\,d\pi^{(i)}(\boldsymbol Z).
\]

Importantly, by the convexity of the unbalanced Sinkhorn divergence in each input measure
under KL marginal relaxation \citep{sejourne2019sinkhorn}, 
$S_{\epsilon,\tau}(\cdot,\widehat P_\theta)$ is convex on the space of positive finite measures.
Since the map $(\pi^{(1)},\ldots,\pi^{(N)}) \mapsto \frac1N\sum_i \pi^{(i)}$ is linear and the transport-cost term is linear, 
$H_\gamma$ is convex in $(\pi^{(1)},\ldots,\pi^{(N)})$.

Now, we further let $\boldsymbol Z_i\sim \pi^{(i)}$ independently and re-parametrize $\pi^{(i)}$ with Dirac measures:
\[
(\pi^{(1)},\ldots,\pi^{(N)})
=
\mathbb E
\left[
(\delta_{\boldsymbol Z_1},\ldots,\delta_{\boldsymbol Z_N})
\right].
\]
With these definitions, we obtain the following by the Jensen's inequality:
\begin{align*}
    H_\gamma(\pi^{(1)},\ldots,\pi^{(N)})
    &=
    H_\gamma( \mathbb E
    \left[
    (\delta_{\boldsymbol Z_1},\ldots,\delta_{\boldsymbol Z_N})
    \right]
    )\\
    &\leq
\mathbb E
\left[
H_\gamma
(\delta_{\boldsymbol Z_1},\ldots,\delta_{\boldsymbol Z_N})
\right]
\le
\sup_{\boldsymbol\zeta^{(1)},\ldots,\boldsymbol\zeta^{(N)}}
H_\gamma
(
\delta_{\boldsymbol\zeta^{(1)}},
\ldots,
\delta_{\boldsymbol\zeta^{(N)}}
).
\end{align*}

The first inequality follows from Jensen's inequality because
$H_\gamma$ is convex in
$(\pi^{(1)},\ldots,\pi^{(N)})$. The second inequality follows because
the expectation of any random variable is upper bounded by its supremum.
Since the above bound holds for arbitrary conditional distributions
$\{\pi^{(i)}\}_{i=1}^N$, taking the supremum over all such conditional
distributions gives
\[
\sup_{\{\pi^{(i)}\}_{i=1}^N}
H_\gamma(\pi^{(1)},\ldots,\pi^{(N)})
\le
\sup_{\boldsymbol\zeta^{(1)},\ldots,\boldsymbol\zeta^{(N)}}
H_\gamma
(
\delta_{\boldsymbol\zeta^{(1)}},
\ldots,
\delta_{\boldsymbol\zeta^{(N)}}
).
\]
Conversely, the class of Dirac measures is a subset of all probability
measures. Hence, for any deterministic tuple
$(\boldsymbol\zeta^{(1)},\ldots,\boldsymbol\zeta^{(N)})$, the tuple
$(\delta_{\boldsymbol\zeta^{(1)}},\ldots,
\delta_{\boldsymbol\zeta^{(N)}})$ is feasible for the optimization over
$\{\pi^{(i)}\}_{i=1}^N$. Therefore,
\[
\sup_{\{\pi^{(i)}\}_{i=1}^N}
H_\gamma(\pi^{(1)},\ldots,\pi^{(N)})
\ge
\sup_{\boldsymbol\zeta^{(1)},\ldots,\boldsymbol\zeta^{(N)}}
H_\gamma
(
\delta_{\boldsymbol\zeta^{(1)}},
\ldots,
\delta_{\boldsymbol\zeta^{(N)}}
).
\]
Combining above two inequalities yields
\[
\sup_{\{\pi^{(i)}\}_{i=1}^N}
H_\gamma(\pi^{(1)},\ldots,\pi^{(N)})
=
\sup_{\mathcal Z\in\mathbb R^{N\times D\times T}}
\left\{
S_{\epsilon,\tau}
\left(
\widehat{\mathbb Q}_{\mathcal Z},
\widehat{\mathbb P}_{\boldsymbol\theta}
\right)
-
\gamma C_{\mathcal Z}
\right\},
\]
where $ 
\mathcal Z=\{\boldsymbol\zeta^{(i)}\}_{i=1}^N$,
$\widehat{\mathbb Q}_{\mathcal Z}
:=
\frac1N\sum_{i=1}^N
\delta_{\boldsymbol\zeta^{(i)}}$, and
$
C_{\mathcal Z}
:=
\frac1N
\sum_{i=1}^N
c_{\overline{\boldsymbol X}^{(i)}}
(\boldsymbol\zeta^{(i)})$.

Substituting this identity into \eqref{eqn:dual-swapped2}, we can rewrite \cref{eq:upbound} as 
\[
            \sup_{\mathbb{Q} \in \mathcal{B}_\rho(\widehat{\mathbb{P}}_N)} S_{\epsilon, \tau}\left(\mathbb{Q}, \, \widehat{\mathbb{P}}_{\boldsymbol{\theta}}\right)
            \leq
\inf_{\gamma\ge0}
\left\{
\gamma\rho+
\sup_{\mathcal Z\in\mathbb R^{N\times D\times T}}
\left[
S_{\epsilon,\tau}
\left(
\widehat{\mathbb Q}_{\mathcal Z},
\widehat{\mathbb P}_{\boldsymbol\theta}
\right)
-
\gamma C_{\mathcal Z}
\right]
\right\}.
\]
Consequently, since the infimum is no larger than the value at any
fixed $\gamma$, the following upper bound also holds for every
$\gamma\ge0$:
\[
            \sup_{\mathbb{Q} \in \mathcal{B}_\rho(\widehat{\mathbb{P}}_N)} S_{\epsilon, \tau}\left(\mathbb{Q}, \, \widehat{\mathbb{P}}_{\boldsymbol{\theta}}\right)
            \leq
\gamma\rho+
\sup_{\mathcal Z\in\mathbb R^{N\times D\times T}}
\left[
S_{\epsilon,\tau}
\left(
\widehat{\mathbb Q}_{\mathcal Z},
\widehat{\mathbb P}_{\boldsymbol\theta}
\right)
-
\gamma C_{\mathcal Z}
\right],
\]

This completes the proof.
\end{proof}

\begin{proof}[Proof of Corollary~\ref{cor:fixed-gamma-drio}]

By \cref{thm:dual-formulation}, for any fixed imputer parameter
$\boldsymbol{\theta}$, the worst-case Sinkhorn term satisfies
\[
\sup_{\mathbb{Q}\in\mathcal{B}_\rho(\widehat{\mathbb{P}}_N)}
S_{\epsilon,\tau}
\left(
\mathbb{Q},
\widehat{\mathbb{P}}_{\boldsymbol{\theta}}
\right)
\le
\inf_{\gamma\ge 0}
\left\{
\gamma\rho
+
\sup_{\mathcal Z}
\left[
S_{\epsilon,\tau}
\left(
\widehat{\mathbb Q}_{\mathcal Z},
\widehat{\mathbb P}_{\boldsymbol\theta}
\right)
-
\gamma C_{\mathcal Z}
\right]
\right\}.
\]
Since the infimum over $\gamma\ge0$ is upper bounded by evaluating the
objective at any $\gamma\ge0$, we have that for any $\gamma\ge0$:
\[
\sup_{\mathbb{Q}\in\mathcal{B}_\rho(\widehat{\mathbb{P}}_N)}
S_{\epsilon,\tau}
\left(
\mathbb{Q},
\widehat{\mathbb{P}}_{\boldsymbol{\theta}}
\right)
\le
\gamma\rho
+
\sup_{\mathcal Z}
\left\{
S_{\epsilon,\tau}
\left(
\widehat{\mathbb Q}_{\mathcal Z},
\widehat{\mathbb P}_{\boldsymbol\theta}
\right)
-
\gamma C_{\mathcal Z}
\right\}.
\]
Multiplying both sides by $(1-\alpha)\ge0$ and adding
$\alpha R_{\boldsymbol{\theta}}$ yields 
\begin{equation}
\alpha R_{\boldsymbol{\theta}}
+
(1-\alpha)
\sup_{\mathbb{Q}\in\mathcal{B}_\rho(\widehat{\mathbb{P}}_N)}
S_{\epsilon,\tau}
\left(
\mathbb{Q},
\widehat{\mathbb{P}}_{\boldsymbol{\theta}}
\right)
\nonumber
\le
\alpha R_{\boldsymbol{\theta}}
+
(1-\alpha)
\left[
\gamma\rho
+
\sup_{\mathcal Z}
\left\{
S_{\epsilon,\tau}
\left(
\widehat{\mathbb Q}_{\mathcal Z},
\widehat{\mathbb P}_{\boldsymbol\theta}
\right)
-
\gamma C_{\mathcal Z}
\right\}
\right].
\end{equation}
Finally,
for fixed $\gamma$ and $\rho$, the term $\gamma\rho$ is constant with
respect to both $\boldsymbol{\theta}$ and $\mathcal Z$. Therefore,
minimizing the fixed-penalty upper bound over $\boldsymbol{\theta}$ is
equivalent to dropping this constant term, which gives
\cref{eq:final-loss}.

This completes the proof.

\end{proof}

\section{Experimental Details}\label{sec:experiment-details}
\subsection{Datasets}

We evaluate on seven multivariate time series datasets spanning diverse domains, including healthcare, transportation, environmental monitoring, industrial systems, and human activity. The CMAPSS dataset is obtained from NASA, PhysioNet from the PhysioNet Challenge 2012, and the remaining 5 datasets from the UCI Machine Learning Repository. Below we describe each dataset and our preprocessing procedures. In model training, validation, and testing pipeline, we follow \citet{tashiro2021csdi} and structure all datasets as three-dimensional tensors of shape $(N, T, D)$ representing samples, time steps, and features, respectively. Note that the exchange of temporal and feature dimensions does not affect our theory and algorithm as one just needs to swap the indices during computation.

\paragraph{CNNpred \citep{hoseinzade2019cnnpred}.} UCI stock market data combining 5 US indices (S\&P 500, NASDAQ, DJI, Russell, NYSE) from 2010--2017. Features include technical indicators, commodity prices, treasury rates, and global market indices. We segment each index into non-overlapping quarterly chunks of 60 trading days, treating each chunk as an independent sample. The resulting shape is $(165, 60, 78)$ representing 165 quarters pooled across the 5 indices.

\paragraph{PEMS08 \citep{guo2019attention}.} California highway traffic sensor data from Districts 8. Raw measurements (flow, occupancy, speed) are collected at 5-minute intervals. We aggregate to daily resolution by summing flow counts and averaging occupancy and speed, treating each sensor as an independent sample with days as the temporal dimension. The resulting shapes are $(170, 62, 3)$ for PEMS08, representing 170 sensors over 62 days, respectively.

\paragraph{PM2.5 \citep{liang2015assessing}.} Beijing air quality data (2010--2014) with hourly measurements of PM2.5 concentration and meteorological variables (dew point, temperature, pressure, wind speed, cumulative snow, cumulative rain). We structure the data with weeks as the sample unit, where each week contains 168 hourly observations. The resulting shape is $(260, 168, 7)$ representing 260 weeks.

\paragraph{Gas Sensor \citep{ziyatdinov2015bioinspired}.} UCI Gas Sensor Array data from 58 experiments exposing 16 metal-oxide sensors to acetone/ethanol mixtures under flow modulation. Each experiment contains 7500 time steps recorded at 25Hz. We downsample by a factor of 10 to reduce temporal redundancy, then segment each experiment into 5 non-overlapping chunks of 150 time steps. Each chunk is treated as an independent sample. The resulting shape is $(290, 150, 16)$ representing $58 \times 5 = 290$ chunks.

\paragraph{CMAPSS (url: \url{https://data.nasa.gov/dataset/cmapss-jet-engine-simulated-data}).} NASA turbofan engine degradation simulation combining the FD001--FD004 subsets. Each engine's operational lifetime constitutes one sample, with operational cycles as the temporal dimension. To ensure uniform sequence length, we retain only engines with at least 207 cycles (the median across all subsets) and truncate to the first 207 cycles. The 21 features comprise sensor measurements tracking engine health degradation. The resulting shape is $(359, 207, 21)$ representing 359 engines.

\paragraph{HAR \citep{anguita2013public}.} UCI Human Activity Recognition dataset containing smartphone inertial sensor readings from 30 subjects performing 6 activities. Each sample corresponds to a 2.56-second sliding window sampled at 50Hz, yielding 128 time steps. The 9 features comprise triaxial body acceleration, body gyroscope, and total acceleration. We use the test partition, resulting in shape $(2947, 128, 9)$.

\paragraph{PhysioNet (url: \url{https://physionet.org/content/challenge-2012/1.0.0/}).} ICU patient monitoring data from the PhysioNet Challenge 2012. Each patient's 48-hour ICU stay is treated as one sample, with 35 clinical variables (vital signs, lab values) aggregated to hourly resolution. This dataset exhibits substantial natural missingness due to irregular clinical measurement schedules. The resulting shape is $(4000, 48, 35)$ where 4000 is the number of patients.

\subsection{Missing Data Generation Mechanisms}\label{sec:missingness}
We implement two artificial missingness mechanisms to evaluate imputation methods.

\paragraph{Missing Completely at Random (MCAR).}
Under MCAR, the probability of a value being missing is independent of both the observed and unobserved data. For each sample $i$, we generate the ground-truth mask $\boldsymbol{M}^{(i)}_{\mathrm{gt}}$ according to the raw observation mask $\boldsymbol{M}^{(i)}$ and the target missing ratio $r$. Specifically, we first identify all observed indices $\mathcal{I}^{(i)} = \{(t,d) : \boldsymbol{M}^{(i)}_{t,d} = 1\}$, then uniformly sample $\lfloor r\; |\mathcal{I}^{(i)}| \rfloor$ indices to mask. The ground-truth mask thus is set to $\boldsymbol{M}^{(i)}_{\mathrm{gt}, d,t} = 0$ for sampled indices and $\boldsymbol{M}^{(i)}_{\mathrm{gt}, d,t} = 1$  otherwise.

\paragraph{Missing Not at Random (MNAR).}
Under MNAR, the probability of missingness depends on the unobserved value itself. We implement a mechanism where extreme values (in either direction) are more likely to be missing. Specifically, for each entry with value $\boldsymbol{X}^{(i)}_{d,t}$, we compute its z-score $z_{d,t}^{(i)} = (\boldsymbol{X}^{(i)}_{d,t} - \overline{\boldsymbol{x}}_d) / \sigma_d$ where $\overline{\boldsymbol{x}}_d$ and $\sigma_d$ are feature mean and standard deviation across sample-temporal space. The missing weigh for each observed value is then given by $w_{t,d}^{(i)} = \Phi(|z_{t,d}^{(i)}|)$, where $\Phi(\cdot)$ is the CDF of the standard normal distribution. Then, we normalize these weights across samples to obtain the missing probabilities $p_{t,d}^{(i)} = w^{(i)}_{t,d} /\sum_i \sum_{(t',d') \in \mathcal{I}^{(i)}} p_{t',d'}^{(i)}$. Consequently extreme values have a higher missing probability, a realistic scenario in many domains such as sensor saturation or reporting bias. Finally, we sample $\lfloor r \cdot |\mathcal{I}^{(i)}| \rfloor$ indices without replacement according to missing ratio $r$ and set the ground truth mask $\boldsymbol{M}^{(i)}_{\mathrm{gt}, d,t} = 0$ for sampled indices and $\boldsymbol{M}^{(i)}_{\mathrm{gt}, d,t} = 1$  otherwise.

\paragraph{Missingness Settings.}
We evaluate at three missing ratios $r \in \{10, 50, 90\}\%$, representing easy, moderate, and challenging imputation scenarios. For datasets with natural missingness (e.g., PhysioNet), the synthetic missing mask is applied on top of the original observed values, and evaluation is performed only on the synthetically masked entries where ground truth is available. To ensure reproducibility and fair comparison, we use deterministic seed generation to guarantee identical test sets across different experimental configurations.

\subsection{Hyperparameter Configuration for \objname}
\label{sec:hyperparams}
Our proposed training objective \eqref{eq:final-loss} incorporates four hyperparameters: the unbalanced Sinkhorn divergence parameters ($\varepsilon$ and $\tau$) and the distributional robustness parameters ($\alpha$ and $\gamma$). While performing cross-validation across the full four-dimensional parameter space could potentially enhance model performance, it incurs prohibitive computational overhead. We therefore fix the divergence parameters and employ cross-validation exclusively to tune $\alpha$ and $\gamma$.

\paragraph{Unbalanced Sinkhorn Divergence Parameters ($\epsilon$, $\tau$).}
The divergence parameters determine the geometry and smoothness of the transport objective. The entropic regularization $\epsilon$ smooths the transport plan and enables stable gradients, while the marginal-relaxation parameter $\tau$ controls the penalty for local mass creation and destruction. Smaller $\tau$ allows more flexible mass mismatch, whereas larger $\tau$ approaches balanced Sinkhorn with stricter mass preservation. Since $\tau$ affects the degree of marginal relaxation and $\gamma$ controls the transport-cost penalty in the adversarial search, tuning both extensively can introduce redundant complexity. We therefore fix $\epsilon$ by the heuristic described below and use $\tau=10$ as the default finite-relaxation setting for all data set and missing scenarios. This choice softly encourages mass conservation while still allowing local mass fluctuations caused by non-stationarity and systematic missingness. In Appendix~\ref{sec:tau_sensitivity}, we provide a sensitivity analysis showing that finite $\tau$ values are stable and consistently outperform the balanced Sinkhorn variant, supporting $\tau=10$ as a robust default.

As for $\varepsilon$, we adopt the adaptive scheme from \cite{muzellec2020missing}. That is, given a data batch, we compute pairwise squared Euclidean distances between samples, then set $\varepsilon = 0.05 \cdot q_{0.5}$, where $q_{0.5}$ is the median of the non-zero distances. This ensures $\varepsilon$ scales appropriately with data geometry across datasets without manual tuning.
This strategy effectively fixes the divergence landscape, allowing us to efficiently optimize the trade-off between fidelity and robustness via $\alpha$ and $\gamma$. In the following, we detail the cross-validation procedure for these parameters.

\paragraph{Cross-Validation for Robustness Parameters ($\alpha$, $\gamma$).}
For each hyperparameter combination, we train the imputer using \eqref{eq:final-loss} on the training set and evaluate performance on the validation set. Following standard practice and compute the mean squared error on the \textit{observed entries} of the validation data as the selection criterion, choosing the $(\alpha, \gamma)$ pair that achieves the lowest validation loss.

Because the reconstruction loss and worst-case divergence can have different numerical scales, we normalize each term in \eqref{eq:final-loss} by an online running average of its own batch value during training. This brings both terms to comparable $\mathcal{O}(1)$ scale and makes the reconstruction--robustness trade-off induced by $\alpha$ more interpretable across datasets. We note that with a sufficiently dense $\alpha$ grid, the raw and normalized objectives can represent similar trade-offs as the normalization only re-weight the two objectives. However, under the realistic scenario with a coarse grid in cross-validation, normalization stabilizes the grid search and reduces sensitivity to dataset-specific loss magnitudes. 

We sweep $\alpha \in \{0.25, 0.5, 0.75, 0.9\}$ and $\gamma \in \{0.1, 1.0, 5.0, 10.0\}$, totaling $16$ configurations per missingness scenario. After selecting $(\alpha^*,\gamma^*)$, we retrain the imputer on the union of the training and validation sets before evaluating on the test set.

This validation strategy does not require access to naturally missing ground truth at deployment. Instead, it tests whether the imputer can reconstruct artificially held-out observed entries in unseen validation samples, thereby measuring its ability to learn spatiotemporal dependencies that generalize from observed to unobserved entries. The selected $(\alpha,\gamma)$ therefore balances reconstruction accuracy and robustness to distributional shift while reducing overfitting to potentially biased empirical observations.

\subsection{Evaluation Metrics}
\label{sec:error-metric}

We evaluate imputation quality using both point-wise and distributional metrics. 
Let $\mathcal{S}$ denote the evaluation sample set. For each sample
$i\in\mathcal{S}$, let $\boldsymbol{X}^{(i)}\in\mathbb{R}^{D\times T}$
be the ground-truth trajectory and $\widehat{\boldsymbol{X}}^{(i)}$ be
the imputed trajectory. Recall that $\boldsymbol{M}^{(i)}$ denotes the
raw observation mask, where $M^{(i)}_{d,t}=1$ if ground truth is
available in the original data, and $\boldsymbol{M}_{\mathrm{gt}}^{(i)}$
denotes the model-observed mask after artificial masking. Thus, the
artificial evaluation mask is
$
\boldsymbol{M}_{\mathrm{eval}}^{(i)}
\coloneqq
\boldsymbol{M}^{(i)}-\boldsymbol{M}_{\mathrm{gt}}^{(i)}
=
\boldsymbol{M}^{(i)}\odot(1-\boldsymbol{M}_{\mathrm{gt}}^{(i)}),
$
where $\overline{M}_{\mathrm{gt},d,t}^{(i)}=1$ indicates an entry that
was originally observed but artificially held out for evaluation. Entries
observed by the model and naturally missing entries are not directly
evaluated. The definitions and evaluation on artificially masked entries follow standard metrics calculation procedure for MTS imputation, see, for example, \citep{tashiro2021csdi}.
Let
$
m_i=\|\boldsymbol{M}_{\mathrm{eval}}^{(i)}\|_0 >0
$
be the number of artificially held-out entries for sample $i$.

\paragraph{Mean Squared Error (MSE).}
MSE measures point-wise reconstruction accuracy on artificially held-out entries:
\begin{equation}
\tag{MSE}
\label{eq:mse}    
    \mathrm{MSE}
    =
    \frac{
    \sum_{i\in\mathcal{S}}
    \left\|
    \boldsymbol{M}_{\mathrm{eval}}^{(i)}
    \odot
    \left(
    \boldsymbol{X}^{(i)}-\widehat{\boldsymbol{X}}^{(i)}
    \right)
    \right\|_F^2
    }{
    \sum_{i\in\mathcal{S}} m_i
    }.
\end{equation}

Thus, MSE evaluates whether the imputer accurately recovers entries that were hidden during evaluation. It does not measure distributional alignment of the imputed entries and the masked ground truth entries.

\paragraph{Squared Maximum Mean Discrepancy (MMD$^2$).}
To evaluate joint feature-temporal distributional alignment, we compute squared maximum mean discrepancy (MMD$^2$) between masked trajectory vectors. For each sample with $m_i>0$, define the masked vector of imputed and ground truth samples as
\[
    \boldsymbol{u}_i
    \coloneqq
    \frac{
    \mathrm{vec}\!\left(
    \boldsymbol{M}_{\mathrm{eval}}^{(i)}
    \odot
    \widehat{\boldsymbol{X}}^{(i)}
    \right)
    }{\sqrt{m_i}},
    \qquad
    \boldsymbol{v}_i
    \coloneqq
    \frac{
    \mathrm{vec}\!\left(
    \boldsymbol{M}_{\mathrm{eval}}^{(i)}
    \odot
    \boldsymbol{X}^{(i)}
    \right)
    }{\sqrt{m_i}},
\]
respectively, where entries outside the evaluation mask are zero-padded in both vectors. We note that these zeros are not observed values being evaluated and they only embed each sample into a common feature--temporal vector space. The normalization by $\sqrt{m_i}$ makes pairwise distances more comparable across samples with different numbers of held-out entries. We then compute the MMD$^2$ using standard procedure in the generative modeling community \citep{gretton2012kernel, li2015generative, sutherland2016generative}. That is,
\begin{equation}
    \tag{MMD$^2$}
    \label{eq:MMD2}
    \mathrm{MMD}^2
    =
    \frac{1}{|\mathcal{S}|^2}
    \sum_{i,j\in\mathcal{S}}
    \kappa(\boldsymbol{u}_i,\boldsymbol{u}_j)
    +
    \frac{1}{|\mathcal{S}|^2}
    \sum_{i,j\in\mathcal{S}}
    \kappa(\boldsymbol{v}_i,\boldsymbol{v}_j)
    -
    \frac{2}{|\mathcal{S}|^2}
    \sum_{i,j\in\mathcal{S}}
    \kappa(\boldsymbol{u}_i,\boldsymbol{v}_j),
\end{equation}
where $\kappa$ is a multi-scale RBF kernel defined by
\[
    \kappa(\boldsymbol{x},\boldsymbol{y})
    =
    \frac{1}{|\mathcal{A}|}
    \sum_{a\in\mathcal{A}}
    \exp\left(
    -
    \frac{
    \|\boldsymbol{x}-\boldsymbol{y}\|_2^2
    }{
    2a\sigma_{\mathrm{med}}^2
    }
    \right),
\]
where $
    \mathcal{A}=\{0.25,0.5,1,2,4\}.
$ 
The bandwidth $\sigma_{\mathrm{med}}^2$ is selected by the median heuristic over the pooled vectors
$
\{\boldsymbol{u}_i\}_{i\in\mathcal{S}}
\cup
\{\boldsymbol{v}_i\}_{i\in\mathcal{S}}
$ \citep{li2015generative, sutherland2016generative}.
This metric compares the empirical distribution of imputed held-out trajectories with that of the true held-out trajectories, thereby preserving more joint feature--temporal structure than metrics that flatten all entries into one scalar distribution.

\paragraph{One-Dimensional Wasserstein-2 Distance.}
Additionally, we also report a one-dimensional Wasserstein-2 distance (1-D W2) as an auxiliary distributional metric. This metric aggregates all artificially held-out imputed values into one empirical distribution and all corresponding ground-truth values into another. Specifically, let
$
\{\widehat{y}_{\ell}\}_{\ell=1}^{K}
$
be the flattened collection of $\widehat{X}^{(i)}_{d,t}$ over all entries with $M^{(i)}_{d,t}=1$, and let
$
\{y_{\ell}\}_{\ell=1}^{K}
$
be the corresponding flattened collection of $X^{(i)}_{d,t}$, where
$
K=\sum_{i\in\mathcal{S}}m_i.
$
Let
$
\widehat{y}_{(1)}\le\cdots\le\widehat{y}_{(K)}
$
and
$
y_{(1)}\le\cdots\le y_{(K)}
$
denote the sorted values. We compute
\begin{equation}
    \tag{$1$-D W2}
    \label{eq:1dw2}
    W^{\mathrm{1D}}_2
    =
    \left[
    \frac{1}{K}
    \sum_{\ell=1}^{K}
    \left(
    \widehat{y}_{(\ell)}-y_{(\ell)}
    \right)^2
    \right]^{1/2}.
\end{equation}
The sorting step is essential because the closed-form one-dimensional Wasserstein distance compares empirical quantiles. While this metric captures marginal distributional alignment of held-out values, it does not preserve sample, feature, and temporal dependence. Therefore, we use it as a complementary metric to evaluate global distributional alignment on held-out entries.

\paragraph{Wasserstein-Fourier Distance.}
We further evaluate whether imputations preserve temporal frequency structure using the Wasserstein-Fourier (WF) distance~\citep{cazelles2020wasserstein}. WF compares time series by first representing each series through its normalized power spectral density and then computing a Wasserstein distance between the resulting spectral distributions. This metric captures how spectral energy is displaced across frequencies, and is therefore useful for evaluating periodicity, seasonality, and other frequency-domain structures.

For each sample--feature pair $(i,d)$, we construct the completed trajectory
$
\widehat{\boldsymbol{x}}_{i,d}
$
by keeping entries observed by the model and filling missing entries with the imputed values. We compare it with the ground-truth trajectory
$
\boldsymbol{x}_{i,d}
$
only when all time steps for that sample--feature pair have ground truth in the raw data. For nonnegative frequency grid
$
\omega_k = k/T,\; k=0,\ldots,\lfloor T/2\rfloor
$, we compute one-sided periodograms using the real FFT:
\[
    P_{i,d,k}
    =
    \left|
    \sum_{t=1}^{T}
    x_{i,d,t}
    \exp(-2\pi\sqrt{-1}\,kt/T)
    \right|^2,
    \qquad
    \widehat{P}_{i,d,k}
    =
    \left|
    \sum_{t=1}^{T}
    \widehat{x}_{i,d,t}
    \exp(-2\pi\sqrt{-1}\,kt/T)
    \right|^2.
\]
The constant factor in the periodogram is omitted because each spectrum is normalized. Then, we normalize each periodogram into a probability distribution over frequencies:
\[
    p_{i,d,k}
    =
    \frac{P_{i,d,k}}{\sum_{\ell}P_{i,d,\ell}},
    \qquad
    \widehat{p}_{i,d,k}
    =
    \frac{\widehat{P}_{i,d,k}}{\sum_{\ell}\widehat{P}_{i,d,\ell}}.
\]
If either spectrum has zero total power, the corresponding pair is excluded. 
The WF distance for $(i,d)$ is then computed as the one-dimensional Wasserstein-2 distance between the two normalized spectra on the frequency grid:
\begin{equation}
    \tag{pair-wise WF}
    \label{eq:pairWF}
    \mathrm{WF}_{i,d}
    =
    W_2
    \left(
    \sum_k \widehat{p}_{i,d,k}\delta_{\omega_k},
    \sum_k p_{i,d,k}\delta_{\omega_k}
    \right).
\end{equation}
Here, $W_2$ is computed by monotone coupling on the frequency axis.  
Note that \eqref{eq:pairWF} compares two weighted discrete distributions over fixed frequency bins, where the weights are the normalized spectral powers, while  \eqref{eq:1dw2} compares two equally weighted empirical distributions of scalar held-out values via sorted order statistics.
Finally, for each sample, we average $\mathrm{WF}_{i,d}$ over all valid features, and then report the mean across valid samples: 
\begin{equation}
    \tag{WF}
    \label{eq:WF}
    \mathrm{WF}
    =
    \frac{1}{|\mathcal{S}_{\mathrm{freq}}|}
    \sum_{(i,d)\in\mathcal{S}_{\mathrm{freq}}}
    \mathrm{WF}_{i,d},
\end{equation}
where $\mathcal{S}_{\mathrm{freq}}$ is the set of valid sample-feature pairs (sample-feature sequences with full ground-truth trajectory available for all $T$ time steps).

\subsection{Configuration for Ablation Study}
\label{app:model_backbones}
\subsubsection{Model Backbone Architecture}
 In principle, \eqref{eq:final-loss} can be paired with any differentiable imputer backbone that maps partially observed trajectories and masks to completed trajectories. In practice, as shown in Table~\ref{tab:ablation}, performance depends on the backbone architecture; our main experiments therefore use SAITS, which is an imputation-specific attention backbone. The imputer $G_{\boldsymbol{\theta}}$ takes two inputs: the partially observed data $\boldsymbol{X}_{\mathrm{obs}}^{(i)} \in \mathbb{R}^{D \times T}$, with missing entries initialized to the per-position mean across the current batch $\overline{\boldsymbol{X}}^{(i)}$, and the binary observation mask $\boldsymbol{M}^{(i)} \in \{0,1\}^{D \times T}$. The data and mask are paired and passed to the backbone (the exact layout is backbone-specific, e.g. $(B, T, 2D)$ for the MLP/LSTM and $(B, D, T, 2)$ for the STT). For each sample $i$, the imputer outputs a deterministic reconstruction $\widehat{\boldsymbol{X}}^{(i)} \in \mathbb{R}^{D \times T}$, so the batched output tensor has shape $\mathbb{R}^{B \times D \times T}$. 

In \cref{sec:numerical} (\cref{tab:ablation}) we ablate 4 standard backbones of varying model design feature and complexities, including  MLP, LSTM, and STT, and SAITS \citep{du2023saits}. We summarize these models in the following.


\paragraph{Multi-Layer Perceptron (MLP).} A simple feedforward network that processes each time step independently. The input, i.e., concatenated data and mask at time $t$, is passed through 3 fully-connected layers of hidden dimension 128 with Rectified Linear Unit (ReLU) activations and dropout, followed by a linear output layer producing an $\R^D$ vector. Missing entries are initialized via per-position mean imputation. This baseline captures no temporal dependencies as it processes each timestamp separately.

\paragraph{Long Short-Term Memory (LSTM).} We implement a bidirectional LSTM that captures temporal dependencies. The input sequence $[\boldsymbol{X}_{\text{obs}}; \boldsymbol{M}] \in \R^{T \times 2D}$ is first projected to dimension 128, then processed by a 2-layer bidirectional LSTM with hidden size 128. The concatenated forward and backward hidden states (dimension 256) are projected back to $D$ features. This architecture captures long-range temporal patterns but lacks an explicit cross-feature attention mechanism, relying solely on the LSTM gates to mix feature information.

\paragraph{Spatiotemporal Transformer (STT).} Our primary architecture employs factorized attention over both temporal and spatial (feature) dimensions. The input $[\boldsymbol{X}_{\text{obs}}; \boldsymbol{M}] \in \R^{D \times T \times 2}$ is projected to dimension 128, and learnable positional encodings are added along both the temporal and spatial axes. Each of the 4 transformer layers applies both temporal and spatial attention: temporal attention performs self-attention across $T$ time steps independently for each feature, with complexity $\mathcal{O}(DT^2)$; spatial attention performs self-attention across $D$ features independently for each time step, with complexity $\mathcal{O}(TD^2)$. A final LayerNorm and linear projection map each $(d, t)$ latent vector to a scalar prediction.

Each attention block uses 8 heads, pre-layer normalization, Gaussian Error Linear Unit (GELU) activation in the feed-forward network (dimension 512), and residual connections. The factorized design reduces complexity from $\mathcal{O}((DT)^2)$ for full attention to $\mathcal{O}(DT^2 + TD^2)$, enabling scalability to larger spatiotemporal grids while capturing both intra-feature temporal dynamics and inter-feature correlations at each time step.

\paragraph{Self-Attention-based Imputation for Time Series (SAITS).}
SAITS~\citep{du2023saits} is an imputation-specific attention backbone designed to reconstruct missing entries from observed temporal context. Overall, SAITS takes the concatenated data and mask $[\boldsymbol{X}_{\mathrm{obs}};\boldsymbol{M}] \in \mathbb{R}^{B\times T\times 2D}$ as input and outputs a completed trajectory $\widehat{\boldsymbol{X}}\in\mathbb{R}^{B\times D\times T}$. 

The input is projected to $d_{\mathrm{model}}=256$ and augmented with sinusoidal temporal positional encodings. The first stack of diagonally-masked self-attention (DMSA) blocks produces an initial reconstruction $\widetilde{\boldsymbol{X}}_1$. Then, observed entries are restored to form
$
\boldsymbol{X}'=\boldsymbol{M}\odot\boldsymbol{X}_{\mathrm{obs}}+(1-\boldsymbol{M})\odot\widetilde{\boldsymbol{X}}_1.
$
A second DMSA stack refines this filled trajectory into $\widetilde{\boldsymbol{X}}_2$ and produces attention information used by a learned combination block, which adaptively fuses $\widetilde{\boldsymbol{X}}_1$ and $\widetilde{\boldsymbol{X}}_2$ into the final reconstruction. The diagonal attention mask can prevent each position from simply copying itself, forcing reconstruction from surrounding temporal context. 

In our main experiments, SAITS provides the backbone architecture for $\objname$, while its original MAE objective with internal masking is included separately in the objective ablation. This separation lets us distinguish the benefit of the proposed distributionally robust objective from the architectural strength of SAITS itself.

\subsubsection{Training Objectives} For objective functions, we compare $\objname$ with B-DRIO ($\objname$ with balanced Sinkhorn formulation), MSE, and the original SAITS procedure. B-DRIO uses balanced optimal transport with strict mass conservation which is equivalent to its  unbalanced variant ($S_{\epsilon, \tau}$) with $\tau \to \infty$, which enforces $\pi_1 = \mu$ and $\pi_2 = \nu$ exactly (See Appendix~\ref{sec:discuss-loss} for details). The original SAITS procedure combines 
point-wise MAE reconstruction with an additional loss on internal held-out set during training to prevent over-fitting \citep{du2023saits}, which could be viewed as an alternative way to induce robustness.

\subsubsection{Other Training Details}
For computational efficiency, we conduct this ablation study on 5 representative datasets, including CNNpred, PEMS08, PM2.5, Gas Senaor, and CMAPSS, under MCAR and MNAR with all missing ratios (10\%, 50\%, and 90\%), ensuring the comparison generalizes across varying data characteristics and missingness scenarios.

All models are trained with the Adam optimizer using learning rate $5 \times 10^{-4}$ and weight decay $10^{-6}$ with batch size 32. For the models with $\objname$-type objective, we use inner learning rate 0.01 and $K$ = 8 for the adversarial weight update across all dataset and missing scenarios. The number of inner ascent steps $K$ controls the strength of the adversarial approximation \citep{sinha2017certifying}. Small $K$ gives a weak but stable adversary and reduces computation, while larger $K$ more closely approximates the inner supremum at higher cost. In practice, overly large $K$ may produce adversarial trajectories that make the outer update noisy or overly conservative. We note that the effects of $K$ are mitigated by the cross-validation  procedure since it tunes the robustness based on the fixed parameters, e.g., $K$ and $\tau$. Therefore, we fix $K=8$ throughout the paper for all $\objname$-type objectives across all dataset and missingness scenarios. 

\subsection{Configuration for Downstream Forecasting Tasks}\label{sec:downstream_task}
Since MTS imputation aims to serve the downstream tasks, e.g., forecasting with the imputed data, we evaluate whether improved imputation quality translates into better predictive usage. For each of the seven datasets and each $(\text{mcar}/\text{mnar}) \times \{10\%,50\%,90\%\}$ setting, we first split each completed sample along the time dimension into train/validation/test segments with a $70\%/10\%/20\%$ partition. Then, we train a 2-layer LSTM model on the training segment using next-step prediction, with hidden dimension $128$, dropout $0.1$, Adam optimizer with learning rate $10^{-3}$, and batch size $64$. The forecaster is rolled out autoregressively on validation horizon for model selection, then the selected model is used to forecast the test horizon autoregressively. The forecasts are evaluated by MSE against the masked ground-truth future trajectory. Since the same forecasting architecture and protocol are used for all imputers, this experiment isolates how different imputed series affect downstream forecasting performance.

\subsection{Additional Numerical Results and Details}
\label{sec:extended-results}

We provide additional numerical results and implementation details for the proposed distributionally robust imputer objective in \eqref{eq:final-loss}. In \cref{sec:more_imputation_performance}, we report full imputation results under both MCAR and MNAR settings to complement \cref{tab:wide_mnar} in \cref{sec:numerical}, including evaluations under MSE \eqref{eq:mse}, MMD$^2$ \eqref{eq:MMD2}, one-dimensional Wasserstein-2 distance \eqref{eq:1dw2}, and Wasserstein-Fourier distance \eqref{eq:WF}. We further analyze the deployable cross-validation strategy in \cref{sec:cv_compare}, the sensitivity to the unbalanced Sinkhorn mass-relaxation parameter in \cref{sec:tau_sensitivity}, and the computational trade-off in \cref{sec:run_time}.

All methods use the same train/validation/test splits, with hyperparameters selected according to each method's default validation criterion when available. For external baselines, we use the authors' recommended configurations from official code repositories whenever possible. Due to computational cost, we train diffusion-based methods (SSSD, CSDI) for at most 30 epochs, deep learning methods for at most 65 epochs, and iterative methods (MF, MDOT) for at most 1000 iterations. All experiments are run on a single RTX-6000 GPU.

\subsubsection{More on Imputation Performance}\label{sec:more_imputation_performance}

\begin{table*}[thb]
\centering
\caption{Per-dataset \ref{eq:mse} and \ref{eq:MMD2} under MCAR missingness. Each cell shows the mean with the standard deviation in parentheses, across all missing ratios. Data normalization procedure, highlighting, and $\objname$ backbone all follow same setting described in \cref{tab:ablation}.}
\label{tab:appendix_mcar_mse_mmd_stacked}
\begin{threeparttable}

\begin{adjustbox}{max width=0.85\textwidth}
\begin{tabular}{l c c c c c c c}
\toprule
Method & CNN & PeMS & PM2.5 & Gas & CMAP & HAR & Physio \\
\midrule
\multicolumn{8}{l}{\textbf{MSE (point-wise reconstruction, lower better) }} \\
\midrule
\multicolumn{8}{l}{\textit{Baselines}} \\
Mean & $1.110_{(0.144)}$ & $1.274_{(0.458)}$ & $1.530_{(0.329)}$ & $1.087_{(0.232)}$ & $1.079_{(0.079)}$ & $1.013_{(0.052)}$ & $1.026_{(0.036)}$ \\
MF & $1.453_{(0.274)}$ & $2.425_{(3.323)}$ & $1.999_{(0.374)}$ & $0.442_{(0.160)}$ & $1.083_{(0.157)}$ & $1.680_{(0.177)}$ & $1.961_{(0.415)}$ \\
\midrule
\multicolumn{8}{l}{\textit{Benchmarks}} \\
CSDI & $0.759_{(0.020)}$ & $13.731_{(11.664)}$ & $1.098_{(0.360)}$ & $0.658_{(0.413)}$ & $\mathit{0.079}_{(0.131)}$ & $\mathit{0.212}_{(0.227)}$ & $\mathbf{0.473}_{(0.146)}$ \\
SSSD & $12.934_{(0.185)}$ & $0.530_{(0.177)}$ & $\underline{0.920}_{(0.126)}$ & $2.223_{(0.777)}$ & $2.076_{(0.209)}$ & $0.280_{(0.329)}$ & $3.403_{(0.634)}$ \\
BRITS & $\mathit{0.533}_{(0.171)}$ & $0.416_{(0.271)}$ & $\mathit{0.933}_{(0.213)}$ & $0.058_{(0.039)}$ & $0.289_{(0.299)}$ & $0.343_{(0.358)}$ & $0.579_{(0.157)}$ \\
SAITS & $\underline{0.466}_{(0.158)}$ & $\underline{0.187}_{(0.118)}$ & $0.978_{(0.328)}$ & $0.022_{(0.019)}$ & $\underline{0.064}_{(0.102)}$ & $\underline{0.201}_{(0.202)}$ & $\mathit{0.520}_{(0.144)}$ \\
IF & $0.626_{(0.107)}$ & $\mathit{0.221}_{(0.083)}$ & $1.075_{(0.385)}$ & $\mathbf{0.011}_{(0.004)}$ & $0.152_{(0.180)}$ & $0.369_{(0.319)}$ & $\underline{0.502}_{(0.202)}$ \\
nMW & $2.578_{(3.173)}$ & $3.311_{(3.806)}$ & $3.097_{(2.427)}$ & $2.067_{(3.260)}$ & $0.627_{(0.750)}$ & $2.418_{(2.480)}$ & $7.855_{(9.875)}$ \\
MDOT & $0.887_{(0.335)}$ & $0.636_{(0.502)}$ & $1.175_{(0.176)}$ & $0.316_{(0.283)}$ & $0.688_{(0.226)}$ & $0.636_{(0.190)}$ & $0.851_{(0.181)}$ \\
PSW & $1.058_{(0.200)}$ & $0.278_{(0.152)}$ & $\mathbf{0.222}_{(0.203)}$ & $\underline{0.011}_{(0.017)}$ & $0.871_{(0.125)}$ & $0.340_{(0.283)}$ & $0.622_{(0.122)}$ \\
\midrule
\multicolumn{8}{l}{\textit{Ours}} \\
DRIO & $\mathbf{0.446}_{(0.157)}$ & $\mathbf{0.170}_{(0.105)}$ & $0.938_{(0.317)}$ & $\mathit{0.018}_{(0.012)}$ & $\mathbf{0.058}_{(0.091)}$ & $\mathbf{0.197}_{(0.210)}$ & $0.522_{(0.148)}$ \\
\midrule
\multicolumn{8}{l}{\textbf{MMD$^2$ (joint feature-temporal distributional alignment, lower better) }} \\
\midrule
\multicolumn{8}{l}{\textit{Baselines}} \\
Mean & $0.201_{(0.251)}$ & $0.253_{(0.295)}$ & $0.218_{(0.259)}$ & $0.296_{(0.321)}$ & $0.194_{(0.215)}$ & $0.153_{(0.167)}$ & $0.151_{(0.160)}$ \\
MF & $0.180_{(0.193)}$ & $0.123_{(0.213)}$ & $0.139_{(0.150)}$ & $0.103_{(0.158)}$ & $0.160_{(0.167)}$ & $0.107_{(0.107)}$ & $0.062_{(0.069)}$ \\
\midrule
\multicolumn{8}{l}{\textit{Benchmarks}} \\
CSDI & $0.100_{(0.174)}$ & $0.084_{(0.083)}$ & $0.085_{(0.146)}$ & $0.018_{(0.031)}$ & $\mathbf{0.000}_{(0.000)}$ & $\mathit{0.005}_{(0.008)}$ & $\mathbf{0.012}_{(0.011)}$ \\
SSSD & $0.210_{(0.010)}$ & $0.039_{(0.037)}$ & $0.045_{(0.043)}$ & $0.076_{(0.040)}$ & $0.035_{(0.002)}$ & $0.022_{(0.033)}$ & $0.059_{(0.013)}$ \\
BRITS & $0.089_{(0.155)}$ & $0.109_{(0.188)}$ & $0.084_{(0.146)}$ & $\mathbf{0.000}_{(0.000)}$ & $0.061_{(0.105)}$ & $0.083_{(0.110)}$ & $0.098_{(0.124)}$ \\
SAITS & $\mathit{0.067}_{(0.063)}$ & $\mathit{0.012}_{(0.013)}$ & $\mathit{0.039}_{(0.021)}$ & $0.002_{(0.003)}$ & $\underline{0.002}_{(0.003)}$ & $\underline{0.002}_{(0.003)}$ & $\underline{0.022}_{(0.020)}$ \\
IF & $0.075_{(0.044)}$ & $\mathbf{0.007}_{(0.004)}$ & $0.047_{(0.012)}$ & $\mathit{0.001}_{(0.001)}$ & $0.011_{(0.013)}$ & $0.018_{(0.027)}$ & $0.023_{(0.024)}$ \\
nMW & $0.283_{(0.431)}$ & $0.347_{(0.450)}$ & $0.304_{(0.420)}$ & $0.307_{(0.532)}$ & $0.161_{(0.279)}$ & $0.303_{(0.422)}$ & $0.237_{(0.263)}$ \\
MDOT & $0.161_{(0.253)}$ & $0.157_{(0.271)}$ & $0.149_{(0.258)}$ & $0.148_{(0.256)}$ & $0.095_{(0.164)}$ & $0.109_{(0.106)}$ & $0.137_{(0.160)}$ \\
PSW & $\mathbf{0.013}_{(0.023)}$ & $0.013_{(0.022)}$ & $\mathbf{0.000}_{(0.000)}$ & $\underline{0.000}_{(0.000)}$ & $0.016_{(0.028)}$ & $0.022_{(0.038)}$ & $0.028_{(0.049)}$ \\
\midrule
\multicolumn{8}{l}{\textit{Ours}} \\
DRIO & $\underline{0.056}_{(0.050)}$ & $\underline{0.010}_{(0.011)}$ & $\underline{0.030}_{(0.018)}$ & $0.001_{(0.002)}$ & $\mathit{0.002}_{(0.003)}$ & $\mathbf{0.002}_{(0.003)}$ & $\mathit{0.022}_{(0.021)}$ \\
\bottomrule
\end{tabular}
\end{adjustbox}
\end{threeparttable}
\end{table*}

\begin{table*}[thb]
\centering
\caption{Per-dataset \ref{eq:mse} and \ref{eq:MMD2} under MNAR missingness. Each cell shows the mean with the standard deviation in parentheses, across all missing ratios. Data normalization procedure, highlighting, and $\objname$ backbone all follow same setting described in \cref{tab:ablation}.}
\label{tab:appendix_mnar_mse_mmd_stacked}
\begin{threeparttable}

\begin{adjustbox}{max width=0.85\textwidth}
\begin{tabular}{l c c c c c c c}
\toprule
Method & CNN & PeMS & PM2.5 & Gas & CMAP & HAR & Physio \\
\midrule
\multicolumn{8}{l}{\textbf{MSE (point-wise reconstruction, lower better) }} \\
\midrule
\multicolumn{8}{l}{\textit{Baselines}} \\
Mean & $1.350_{(0.053)}$ & $1.203_{(0.153)}$ & $1.489_{(0.265)}$ & $1.402_{(0.268)}$ & $1.155_{(0.025)}$ & $1.234_{(0.181)}$ & $1.351_{(0.424)}$ \\
MF & $1.670_{(0.058)}$ & $2.368_{(2.759)}$ & $1.763_{(0.353)}$ & $0.565_{(0.307)}$ & $1.164_{(0.142)}$ & $1.895_{(0.064)}$ & $2.258_{(0.349)}$ \\
\midrule
\multicolumn{8}{l}{\textit{Benchmarks}} \\
CSDI & $1.232_{(0.265)}$ & $47.850_{(54.112)}$ & $1.078_{(0.280)}$ & $1.164_{(1.250)}$ & $\mathit{0.136}_{(0.226)}$ & $\mathit{0.287}_{(0.284)}$ & $\mathbf{0.713}_{(0.234)}$ \\
SSSD & $13.309_{(0.412)}$ & $0.596_{(0.136)}$ & $0.996_{(0.181)}$ & $2.446_{(0.690)}$ & $1.904_{(0.638)}$ & $0.379_{(0.442)}$ & $3.819_{(0.847)}$ \\
BRITS & $\mathit{0.764}_{(0.230)}$ & $0.544_{(0.227)}$ & $0.913_{(0.462)}$ & $0.085_{(0.067)}$ & $0.360_{(0.346)}$ & $0.457_{(0.407)}$ & $0.868_{(0.263)}$ \\
SAITS & $\underline{0.680}_{(0.207)}$ & $\mathbf{0.250}_{(0.101)}$ & $\mathit{0.874}_{(0.383)}$ & $0.030_{(0.026)}$ & $\underline{0.104}_{(0.167)}$ & $\underline{0.260}_{(0.253)}$ & $0.775_{(0.272)}$ \\
IF & $0.884_{(0.052)}$ & $\mathit{0.288}_{(0.075)}$ & $1.008_{(0.311)}$ & $\mathbf{0.012}_{(0.001)}$ & $0.160_{(0.240)}$ & $0.424_{(0.284)}$ & $\mathit{0.719}_{(0.211)}$ \\
nMW & $2.232_{(2.204)}$ & $2.828_{(3.052)}$ & $2.999_{(2.627)}$ & $1.280_{(1.869)}$ & $0.648_{(0.669)}$ & $2.155_{(1.495)}$ & $7.830_{(9.431)}$ \\
MDOT & $0.979_{(0.346)}$ & $0.652_{(0.359)}$ & $1.167_{(0.371)}$ & $0.411_{(0.354)}$ & $0.789_{(0.203)}$ & $0.772_{(0.136)}$ & $1.131_{(0.285)}$ \\
PSW & $1.313_{(0.221)}$ & $0.376_{(0.112)}$ & $\mathbf{0.422}_{(0.426)}$ & $\underline{0.016}_{(0.022)}$ & $0.925_{(0.100)}$ & $0.419_{(0.285)}$ & $\underline{0.716}_{(0.109)}$ \\
\midrule
\multicolumn{8}{l}{\textit{Ours}} \\
DRIO & $\mathbf{0.645}_{(0.193)}$ & $\underline{0.252}_{(0.105)}$ & $\underline{0.818}_{(0.337)}$ & $\mathit{0.024}_{(0.021)}$ & $\mathbf{0.097}_{(0.157)}$ & $\mathbf{0.256}_{(0.260)}$ & $0.773_{(0.271)}$ \\
\midrule
\multicolumn{8}{l}{\textbf{MMD$^2$ (joint feature-temporal distributional alignment, lower better) }} \\
\midrule
\multicolumn{8}{l}{\textit{Baselines}} \\
Mean & $0.208_{(0.229)}$ & $0.255_{(0.294)}$ & $0.231_{(0.247)}$ & $0.295_{(0.307)}$ & $0.201_{(0.215)}$ & $0.184_{(0.184)}$ & $0.177_{(0.180)}$ \\
MF & $0.202_{(0.205)}$ & $0.132_{(0.229)}$ & $0.179_{(0.176)}$ & $0.121_{(0.155)}$ & $0.175_{(0.181)}$ & $0.116_{(0.110)}$ & $0.065_{(0.066)}$ \\
\midrule
\multicolumn{8}{l}{\textit{Benchmarks}} \\
CSDI & $0.122_{(0.212)}$ & $0.171_{(0.093)}$ & $0.144_{(0.213)}$ & $\mathbf{0.000}_{(0.000)}$ & $0.041_{(0.072)}$ & $\mathit{0.014}_{(0.023)}$ & $\mathbf{0.021}_{(0.020)}$ \\
SSSD & $0.193_{(0.009)}$ & $0.052_{(0.061)}$ & $0.069_{(0.081)}$ & $0.058_{(0.012)}$ & $0.025_{(0.010)}$ & $0.048_{(0.075)}$ & $0.051_{(0.005)}$ \\
BRITS & $0.123_{(0.213)}$ & $0.161_{(0.279)}$ & $0.145_{(0.251)}$ & $0.009_{(0.016)}$ & $0.087_{(0.151)}$ & $0.143_{(0.185)}$ & $0.137_{(0.170)}$ \\
SAITS & $\mathit{0.102}_{(0.102)}$ & $\underline{0.015}_{(0.014)}$ & $\underline{0.050}_{(0.048)}$ & $0.002_{(0.002)}$ & $\underline{0.006}_{(0.010)}$ & $\underline{0.005}_{(0.007)}$ & $\underline{0.026}_{(0.025)}$ \\
IF & $0.104_{(0.054)}$ & $\mathit{0.015}_{(0.016)}$ & $0.068_{(0.026)}$ & $\mathit{0.000}_{(0.000)}$ & $\mathit{0.012}_{(0.020)}$ & $0.023_{(0.035)}$ & $0.049_{(0.064)}$ \\
nMW & $0.291_{(0.393)}$ & $0.353_{(0.462)}$ & $0.331_{(0.421)}$ & $0.268_{(0.464)}$ & $0.153_{(0.266)}$ & $0.315_{(0.403)}$ & $0.230_{(0.261)}$ \\
MDOT & $0.179_{(0.225)}$ & $0.179_{(0.311)}$ & $0.195_{(0.250)}$ & $0.164_{(0.284)}$ & $0.145_{(0.205)}$ & $0.146_{(0.171)}$ & $0.164_{(0.179)}$ \\
PSW & $\mathbf{0.051}_{(0.089)}$ & $0.033_{(0.057)}$ & $\mathit{0.062}_{(0.107)}$ & $\underline{0.000}_{(0.000)}$ & $0.047_{(0.081)}$ & $0.045_{(0.078)}$ & $0.029_{(0.026)}$ \\
\midrule
\multicolumn{8}{l}{\textit{Ours}} \\
DRIO & $\underline{0.087}_{(0.082)}$ & $\mathbf{0.015}_{(0.014)}$ & $\mathbf{0.039}_{(0.038)}$ & $0.001_{(0.002)}$ & $\mathbf{0.004}_{(0.007)}$ & $\mathbf{0.005}_{(0.008)}$ & $\mathit{0.026}_{(0.028)}$ \\
\bottomrule
\end{tabular}
\end{adjustbox}
\end{threeparttable}
\end{table*}

\begin{table*}[thb]
\centering
\caption{Per-dataset \ref{eq:WF} and \ref{eq:1dw2} distances under MCAR missingness. Each cell shows the mean with the standard deviation in parentheses, across all missing ratios. Data normalization procedure, highlighting, and $\objname$ backbone all follow same setting described in \cref{tab:ablation}.}
\label{tab:appendix_mcar_wf_w2_stacked}
\begin{threeparttable}

\begin{adjustbox}{max width=0.85\textwidth}
\begin{tabular}{l c c c c c c c}
\toprule
Method & CNN & PeMS & PM2.5 & Gas & CMAP & HAR & Physio \\
\midrule
\multicolumn{8}{l}{\textbf{WF (distributional alignment in frequency domain, lower better) }} \\
\midrule
\multicolumn{8}{l}{\textit{Baselines}} \\
Mean & $0.082_{(0.042)}$ & $0.124_{(0.077)}$ & $0.164_{(0.075)}$ & $0.177_{(0.105)}$ & $0.080_{(0.050)}$ & $0.142_{(0.085)}$ & $0.119_{(0.080)}$ \\
MF & $0.089_{(0.041)}$ & $0.097_{(0.081)}$ & $0.176_{(0.080)}$ & $0.111_{(0.056)}$ & $0.048_{(0.022)}$ & $0.178_{(0.053)}$ & $0.119_{(0.065)}$ \\
\midrule
\multicolumn{8}{l}{\textit{Benchmarks}} \\
CSDI & $0.054_{(0.041)}$ & $0.051_{(0.025)}$ & $0.055_{(0.034)}$ & $0.028_{(0.031)}$ & $\underline{0.009}_{(0.012)}$ & $\mathbf{0.015}_{(0.013)}$ & $0.039_{(0.031)}$ \\
SSSD & $0.115_{(0.017)}$ & $0.057_{(0.036)}$ & $0.056_{(0.008)}$ & $0.198_{(0.056)}$ & $0.087_{(0.042)}$ & $\mathit{0.025}_{(0.017)}$ & $0.064_{(0.032)}$ \\
BRITS & $0.045_{(0.032)}$ & $0.053_{(0.042)}$ & $0.040_{(0.028)}$ & $0.018_{(0.010)}$ & $0.055_{(0.069)}$ & $0.047_{(0.053)}$ & $\mathbf{0.036}_{(0.026)}$ \\
SAITS & $\mathbf{0.039}_{(0.026)}$ & $\mathit{0.036}_{(0.026)}$ & $\mathit{0.036}_{(0.019)}$ & $\mathit{0.012}_{(0.007)}$ & $\mathit{0.010}_{(0.010)}$ & $0.033_{(0.022)}$ & $\mathit{0.038}_{(0.029)}$ \\
IF & $\mathit{0.041}_{(0.024)}$ & $\mathbf{0.035}_{(0.023)}$ & $0.042_{(0.020)}$ & $\underline{0.007}_{(0.002)}$ & $0.014_{(0.016)}$ & $0.050_{(0.048)}$ & $0.042_{(0.035)}$ \\
nMW & $0.101_{(0.091)}$ & $0.086_{(0.044)}$ & $0.078_{(0.042)}$ & $0.044_{(0.021)}$ & $0.047_{(0.066)}$ & $0.094_{(0.039)}$ & $0.079_{(0.038)}$ \\
MDOT & $0.070_{(0.048)}$ & $0.083_{(0.077)}$ & $0.114_{(0.083)}$ & $0.089_{(0.096)}$ & $0.035_{(0.029)}$ & $0.095_{(0.060)}$ & $0.091_{(0.091)}$ \\
PSW & $0.090_{(0.071)}$ & $0.046_{(0.033)}$ & $\mathbf{0.015}_{(0.012)}$ & $\mathbf{0.003}_{(0.004)}$ & $0.078_{(0.063)}$ & $\underline{0.019}_{(0.014)}$ & $0.048_{(0.033)}$ \\
\midrule
\multicolumn{8}{l}{\textit{Ours}} \\
DRIO & $\underline{0.041}_{(0.029)}$ & $\underline{0.036}_{(0.027)}$ & $\underline{0.031}_{(0.019)}$ & $0.013_{(0.007)}$ & $\mathbf{0.008}_{(0.007)}$ & $0.030_{(0.024)}$ & $\underline{0.038}_{(0.028)}$ \\
\midrule
\multicolumn{8}{l}{\textbf{$\mathbf1$-D W2 (entry-wise distributional alignment lower better) }} \\
\midrule
\multicolumn{8}{l}{\textit{Baselines}} \\
Mean & $0.231_{(0.152)}$ & $0.279_{(0.193)}$ & $0.334_{(0.215)}$ & $0.341_{(0.231)}$ & $0.374_{(0.222)}$ & $0.395_{(0.339)}$ & $0.360_{(0.281)}$ \\
MF & $\underline{0.137}_{(0.094)}$ & $0.457_{(0.673)}$ & $0.315_{(0.215)}$ & $0.127_{(0.110)}$ & $0.188_{(0.113)}$ & $0.106_{(0.078)}$ & $\mathit{0.199}_{(0.174)}$ \\
\midrule
\multicolumn{8}{l}{\textit{Benchmarks}} \\
CSDI & $0.294_{(0.273)}$ & $1.602_{(1.428)}$ & $0.308_{(0.216)}$ & $0.377_{(0.126)}$ & $\mathit{0.059}_{(0.091)}$ & $0.100_{(0.139)}$ & $\underline{0.168}_{(0.130)}$ \\
SSSD & $1.472_{(0.932)}$ & $0.230_{(0.194)}$ & $\mathit{0.285}_{(0.185)}$ & $0.318_{(0.069)}$ & $0.306_{(0.125)}$ & $0.183_{(0.229)}$ & $0.472_{(0.250)}$ \\
BRITS & $0.247_{(0.253)}$ & $0.266_{(0.291)}$ & $0.314_{(0.228)}$ & $0.057_{(0.063)}$ & $0.193_{(0.216)}$ & $0.226_{(0.264)}$ & $0.250_{(0.241)}$ \\
SAITS & $0.217_{(0.228)}$ & $\mathit{0.091}_{(0.077)}$ & $0.292_{(0.200)}$ & $0.035_{(0.041)}$ & $\underline{0.056}_{(0.074)}$ & $\underline{0.068}_{(0.092)}$ & $0.208_{(0.180)}$ \\
IF & $0.204_{(0.184)}$ & $0.096_{(0.076)}$ & $0.293_{(0.196)}$ & $\mathit{0.027}_{(0.024)}$ & $0.149_{(0.149)}$ & $0.165_{(0.214)}$ & $0.210_{(0.185)}$ \\
nMW & $0.820_{(1.130)}$ & $0.990_{(1.252)}$ & $0.852_{(1.050)}$ & $0.780_{(1.194)}$ & $0.370_{(0.486)}$ & $0.753_{(1.006)}$ & $1.481_{(2.001)}$ \\
MDOT & $\mathit{0.189}_{(0.133)}$ & $0.196_{(0.154)}$ & $0.298_{(0.199)}$ & $0.197_{(0.196)}$ & $0.210_{(0.139)}$ & $0.249_{(0.240)}$ & $0.319_{(0.290)}$ \\
PSW & $\mathbf{0.100}_{(0.078)}$ & $\mathbf{0.046}_{(0.038)}$ & $\mathbf{0.114}_{(0.161)}$ & $\mathbf{0.008}_{(0.012)}$ & $0.156_{(0.125)}$ & $\mathit{0.085}_{(0.104)}$ & $\mathbf{0.139}_{(0.117)}$ \\
\midrule
\multicolumn{8}{l}{\textit{Ours}} \\
DRIO & $0.200_{(0.206)}$ & $\underline{0.080}_{(0.070)}$ & $\underline{0.278}_{(0.187)}$ & $\underline{0.027}_{(0.024)}$ & $\mathbf{0.052}_{(0.066)}$ & $\mathbf{0.063}_{(0.086)}$ & $0.208_{(0.180)}$ \\
\bottomrule
\end{tabular}
\end{adjustbox}
\end{threeparttable}
\end{table*}

\begin{table*}[thb]
\centering
\caption{Per-dataset \ref{eq:WF} and \ref{eq:1dw2} distances under MNAR missingness. Each cell shows the mean with the standard deviation in parentheses, across all missing ratios. Data normalization procedure, highlighting, and $\objname$ backbone all follow same setting described in \cref{tab:ablation}.}
\label{tab:appendix_mnar_wf_w2_stacked}
\begin{threeparttable}

\begin{adjustbox}{max width=0.85\textwidth}
\begin{tabular}{l c c c c c c c}
\toprule
Method & CNN & PeMS & PM2.5 & Gas & CMAP & HAR & Physio \\
\midrule
\multicolumn{8}{l}{\textbf{WF (distributional alignment in frequency domain, lower better) }} \\
\midrule
\multicolumn{8}{l}{\textit{Baselines}} \\
Mean & $0.082_{(0.044)}$ & $0.117_{(0.062)}$ & $0.143_{(0.064)}$ & $0.177_{(0.100)}$ & $0.077_{(0.049)}$ & $0.141_{(0.075)}$ & $0.125_{(0.077)}$ \\
MF & $0.090_{(0.043)}$ & $0.116_{(0.093)}$ & $0.179_{(0.079)}$ & $0.120_{(0.068)}$ & $0.049_{(0.028)}$ & $0.183_{(0.052)}$ & $0.124_{(0.067)}$ \\
\midrule
\multicolumn{8}{l}{\textit{Benchmarks}} \\
CSDI & $0.059_{(0.043)}$ & $0.056_{(0.031)}$ & $0.064_{(0.038)}$ & $0.025_{(0.021)}$ & $\mathbf{0.013}_{(0.018)}$ & $\mathbf{0.015}_{(0.012)}$ & $\mathit{0.041}_{(0.036)}$ \\
SSSD & $0.116_{(0.019)}$ & $0.057_{(0.048)}$ & $0.063_{(0.015)}$ & $0.199_{(0.078)}$ & $0.083_{(0.048)}$ & $\mathit{0.027}_{(0.025)}$ & $0.073_{(0.039)}$ \\
BRITS & $0.047_{(0.034)}$ & $0.059_{(0.053)}$ & $0.051_{(0.039)}$ & $0.019_{(0.012)}$ & $0.034_{(0.034)}$ & $0.057_{(0.065)}$ & $\mathbf{0.038}_{(0.029)}$ \\
SAITS & $\mathit{0.045}_{(0.032)}$ & $\mathit{0.040}_{(0.033)}$ & $\mathit{0.037}_{(0.020)}$ & $0.012_{(0.006)}$ & $\underline{0.013}_{(0.016)}$ & $0.038_{(0.027)}$ & $0.041_{(0.034)}$ \\
IF & $\underline{0.043}_{(0.022)}$ & $\mathbf{0.036}_{(0.025)}$ & $0.049_{(0.023)}$ & $\underline{0.006}_{(0.001)}$ & $0.017_{(0.023)}$ & $0.056_{(0.055)}$ & $0.042_{(0.037)}$ \\
nMW & $0.098_{(0.088)}$ & $0.083_{(0.040)}$ & $0.088_{(0.051)}$ & $0.044_{(0.023)}$ & $0.051_{(0.068)}$ & $0.093_{(0.045)}$ & $0.081_{(0.036)}$ \\
MDOT & $0.071_{(0.051)}$ & $0.085_{(0.080)}$ & $0.108_{(0.068)}$ & $0.093_{(0.099)}$ & $0.038_{(0.037)}$ & $0.103_{(0.069)}$ & $0.094_{(0.092)}$ \\
PSW & $0.091_{(0.069)}$ & $0.047_{(0.034)}$ & $\mathbf{0.017}_{(0.015)}$ & $\mathbf{0.003}_{(0.004)}$ & $0.076_{(0.062)}$ & $\underline{0.019}_{(0.014)}$ & $0.052_{(0.041)}$ \\
\midrule
\multicolumn{8}{l}{\textit{Ours}} \\
DRIO & $\mathbf{0.041}_{(0.028)}$ & $\underline{0.039}_{(0.033)}$ & $\underline{0.033}_{(0.018)}$ & $\mathit{0.012}_{(0.007)}$ & $\mathit{0.014}_{(0.018)}$ & $0.035_{(0.030)}$ & $\underline{0.040}_{(0.033)}$ \\
\midrule
\multicolumn{8}{l}{\textbf{$1$-D W2 (entry-wise distributional alignment lower better) }} \\
\midrule
\multicolumn{8}{l}{\textit{Baselines}} \\
Mean & $0.333_{(0.232)}$ & $0.314_{(0.212)}$ & $0.457_{(0.352)}$ & $0.435_{(0.292)}$ & $0.401_{(0.244)}$ & $0.458_{(0.356)}$ & $0.450_{(0.268)}$ \\
MF & $\underline{0.204}_{(0.121)}$ & $0.458_{(0.660)}$ & $\mathit{0.363}_{(0.266)}$ & $0.146_{(0.118)}$ & $0.218_{(0.136)}$ & $\mathit{0.128}_{(0.075)}$ & $\underline{0.231}_{(0.103)}$ \\
\midrule
\multicolumn{8}{l}{\textit{Benchmarks}} \\
CSDI & $0.385_{(0.344)}$ & $2.505_{(1.910)}$ & $0.422_{(0.377)}$ & $0.391_{(0.206)}$ & $\mathit{0.102}_{(0.162)}$ & $0.169_{(0.225)}$ & $\mathit{0.234}_{(0.156)}$ \\
SSSD & $1.434_{(0.914)}$ & $0.234_{(0.209)}$ & $0.393_{(0.338)}$ & $0.357_{(0.153)}$ & $0.289_{(0.155)}$ & $0.261_{(0.323)}$ & $0.495_{(0.217)}$ \\
BRITS & $0.356_{(0.356)}$ & $0.316_{(0.341)}$ & $0.436_{(0.411)}$ & $0.100_{(0.130)}$ & $0.245_{(0.283)}$ & $0.320_{(0.361)}$ & $0.364_{(0.272)}$ \\
SAITS & $0.311_{(0.315)}$ & $\underline{0.109}_{(0.080)}$ & $0.374_{(0.331)}$ & $0.034_{(0.034)}$ & $\underline{0.089}_{(0.127)}$ & $\underline{0.104}_{(0.132)}$ & $0.273_{(0.148)}$ \\
IF & $0.307_{(0.267)}$ & $0.148_{(0.108)}$ & $0.390_{(0.327)}$ & $\underline{0.025}_{(0.022)}$ & $0.147_{(0.184)}$ & $0.212_{(0.265)}$ & $0.319_{(0.221)}$ \\
nMW & $0.708_{(0.870)}$ & $0.893_{(1.131)}$ & $0.833_{(0.982)}$ & $0.574_{(0.862)}$ & $0.383_{(0.467)}$ & $0.647_{(0.710)}$ & $1.473_{(1.932)}$ \\
MDOT & $\mathit{0.289}_{(0.220)}$ & $0.250_{(0.208)}$ & $0.419_{(0.356)}$ & $0.279_{(0.279)}$ & $0.280_{(0.215)}$ & $0.328_{(0.305)}$ & $0.415_{(0.279)}$ \\
PSW & $\mathbf{0.192}_{(0.159)}$ & $\mathbf{0.093}_{(0.079)}$ & $\mathbf{0.228}_{(0.327)}$ & $\mathbf{0.017}_{(0.026)}$ & $0.213_{(0.175)}$ & $0.148_{(0.177)}$ & $\mathbf{0.210}_{(0.153)}$ \\
\midrule
\multicolumn{8}{l}{\textit{Ours}} \\
DRIO & $0.292_{(0.291)}$ & $\mathit{0.110}_{(0.082)}$ & $\underline{0.357}_{(0.323)}$ & $\mathit{0.030}_{(0.031)}$ & $\mathbf{0.070}_{(0.098)}$ & $\mathbf{0.101}_{(0.138)}$ & $0.274_{(0.157)}$ \\
\bottomrule
\end{tabular}
\end{adjustbox}
\end{threeparttable}
\end{table*}

\cref{tab:appendix_mcar_mse_mmd_stacked,tab:appendix_mnar_mse_mmd_stacked} report the main imputation metrics, MSE and MMD$^2$, under MCAR and MNAR. Across both missingness mechanisms, $\objname$ is among the most stable methods across datasets and metrics. Under MCAR, $\objname$ achieves the best MSE on CNNpred, PeMS08, CMAPSS, and HAR, and remains competitive on GasSensor and PhysioNet. Under MNAR, where systematic missingness induces stronger distributional bias, $\objname$ remains competitive among the baselines and benchmarks. Specifically, $\objname$ achieves the best MSE on CNNpred, CMAPSS, and HAR, the second-best MSE on PeMS08 and PM2.5, and the third-best MSE on GasSensor. These results suggest that in the cases where the observed empirical distribution can be a biased proxy for the true data-generating distribution, our proposed distributional robustness regularized objective provides robust and stable imputations. Importantly, the relatively small degradation from MCAR to MNAR suggests that explicitly incorporating distributional uncertainty helps stabilize imputation across missingness scenarios.

As for MMD$^2$ (which reflects joint feature-temporal alignment), we observe a similar pattern. Under MCAR, $\objname$ is consistently competitive, achieving top-three on CNNpred, PeMS08, PM2.5, CMAPSS, HAR, and PhysioNet. Under MNAR, $\objname$ achieves the best MMD$^2$ on PeMS08, PM2.5, CMAPSS, and HAR, and remains competitive on CNNpred and PhysioNet. These results indicate that $\objname$ does not merely reduce point-wise error, but also improves the distributional structure of the completed trajectories. 

The full tables also show that strong baselines vary considerably across datasets. CSDI performs very well on selected datasets, e.g., CMAPSS and PhysioNet, but is unstable on PeMS08, where its MSE becomes extremely large under both MCAR and MNAR. This suggests that diffusion-based generation can be sensitive to non-stationary traffic dynamics and structural missingness. SSSD also performs competitively in some settings, but its performance is highly dataset-dependent, with large errors on CNNpred and GasSensor. PSW, as an OT-based method for each sample, is particularly strong for one-dimensional distributional metrics and for datasets such as PM2.5 and GasSensor. This is expected because it directly optimizes a Wasserstein-style alignment objective for every sample. However, PSW is less uniform in MSE and does not consistently dominate on MMD$^2$. In contrast, while $\objname$ is not the best method on every dataset or metric, but it provides the most balanced trade-off across reconstruction accuracy, joint distributional alignment, and missingness mechanisms.

\cref{tab:appendix_mcar_wf_w2_stacked,tab:appendix_mnar_wf_w2_stacked} provide additional distributional evaluations using frequency-domain Wasserstein distance (WF) and one-dimensional W2. These metrics highlight additional aspects of the imputed series. For WF, $\objname$ is also consistently competitive: under MCAR, it achieves the best result on CMAPSS and top-tier results on CNNpred, PeMS08, PM2.5, and PhysioNet; under MNAR, it achieves the best result on CNNpred and remains among the top methods on PeMS08, PM2.5, GasSensor, CMAPSS, and PhysioNet. This suggests that the robust regularizer helps preserve spectral structure such as periodicity and temporal frequency content. For one-dimensional W2, PSW is often strongest, which is expected because it directly targets Wasserstein-type distributional alignment. That said, $\objname$ remains competitive on several datasets, especially PeMS08, PM2.5, GasSensor, CMAPSS, and HAR. Since one-dimensional W2 collapses sample, feature, and temporal structure into a scalar marginal distribution, these results as complementary rather than primary evidence of joint distributional alignment.

Finally, the competitiveness of CSDI, SSSD, and PSW should be interpreted together with computational cost. Diffusion-based methods such as CSDI and SSSD require iterative denoising or generative sampling procedures, while PSW relies on Sinkhorn optimization on each sample, which can be expensive when applied across many samples and scenarios. $\objname$ also introduces additional training cost through alternating adversarial updates, but these added costs are moderate, compared to CSDI, SSSD, and PSW. We provide wall-clock training comparisons in \cref{tab:run_time}.  Additionally, we note that an imputer trained with $\objname$ does not change the inference-time imputation architecture. Therefore, the inference-time costs are the same as training the same backbones with a non-robust objective like MSE. 

Overall, the results show that $\objname$ achieves a favorable robustness-accuracy trade-off as it consistently provides strong reconstruction, joint distributional alignment, and downstream-relevant robustness across diverse datasets and missingness mechanisms.

\subsubsection{More on Cross Validation}\label{sec:cv_compare}

\cref{tab:hyperparam_val_vs_oracle} compares the deployable cross-validation selection rule based on validation MSE (See \cref{sec:hyperparams} for details) with a non-deployable oracle that selects $(\alpha,\gamma)$ using test MSE. Recall that $\alpha$ controls the reconstruction--robustness trade-off in \eqref{eq:final-loss}: smaller $\alpha$ assigns larger weight to the worst-case Sinkhorn regularizer, while larger $\alpha$ keeps the objective closer to reconstruction training. The parameter $\gamma$ controls the transport-cost penalty in the inner maximization: smaller $\gamma$ allows the adversary to explore a broader ambiguity neighborhood, whereas larger $\gamma$ restricts the adversary closer to the empirical observations.

From \cref{tab:hyperparam_val_vs_oracle}, the oracle selected $\alpha \leq 0.75$ for more than $54\%$ across all cases, with consistent percentages across MCAR and MNAR scenarios, showing effectiveness of distributional robustness in MTS imputation. The oracle-selected $\alpha$ values tend to increase as the missing ratio grows. This is reasonable since at higher missingness, the imputer has very limited information, so an overly aggressive adversarial term can become less effective. That said, the selection criterion is MSE-based, which naturally favors configurations with stronger reconstruction emphasis. Even under these conditions, more than half of the settings prefer a high robust regularization weight. 

The behavior of $\gamma$ further suggests that the appropriate robustness radius is dataset-dependent. Across the table, both validation and oracle choices span the full grid $\gamma\in\{0.1,1,5,10\}$, indicating that no single adversarial radius dominates across datasets or missingness mechanisms. This is consistent with the role of $\gamma$: datasets with different temporal scales, feature correlations, and support mismatch require different degrees of adversarial exploration. Moreover, many disagreements in $\gamma$ do not translate into large performance differences, suggesting that several $\gamma$ values often lie on a similar performance plateau once $\alpha$ is reasonably chosen.

Notably, in these CV procedures the grid is intentionally small due to computational efficiency, with only four values of $\alpha$ and four values of $\gamma$. Hence, the reported performance should be viewed as a practical lower bound under a coarse deployable search and a finer grid cross validation criterion could further improve the selected configuration. Overall, these results support our goal of designing $\objname$ as a reconstruction objective regularized by distributional robustness, rather than as a purely reconstruction-driven or robust-driven method.

Finally, the deployable validation rule closely tracks the oracle in test performance. It exactly matches the oracle pair in $16/42$ scenarios ($38.1\%$) and selects the same $\alpha$ in $24/42$ scenarios ($57.1\%$), indicating that it often identifies the same reconstruction-robustness trade-off even when the selected adversarial penalty $\gamma$ differs. \cref{fig:cv_test_mse_gap} further shows that the practical cost of deployable validation is small. Specifically, the optimality gap, defined as the difference between the test MSE of the validation-selected pair and that of the oracle-selected pair, is close to zero for most datasets and missing ratios, with only a few outlying scenarios. Thus, even when the selected $(\alpha,\gamma)$ does not exactly match the oracle, it typically lies in a near-oracle region of the grid. These results suggest that our deployable cross-validation procedure remains effective without access to true missing values at test time.

\begin{table*}[thb]
\centering
\caption{Hyperparameter selection comparison for $\objname$ with SAITS backbone. Each cell shows cross-validated ($\alpha$, $\gamma$) / oracle ($\alpha$, $\gamma$), selected by validation MSE and test MSE, respectively. \textbf{Bold} cells indicate exact agreement between the two selection procedure. Grid: $\alpha \in \{0.25, 0.5, 0.75, 0.9\}$, $\gamma \in \{0.1, 1, 5, 10\}$.}
\label{tab:hyperparam_val_vs_oracle}
\small
\begin{tabular}{ll rrr}
\toprule
& & \multicolumn{3}{c}{Missing Ratio} \\
\cmidrule(lr){3-5}
Dataset & Mechanism & 10\% & 50\% & 90\% \\
\midrule
  \multirow{2}{*}{CNNpred} & MCAR & (0.75, 10) / (0.5, 0.1) & (0.5, 1) / (0.25, 1) & \textbf{(0.75, 5) / (0.75, 5)} \\
   & MNAR & (0.25, 10) / (0.75, 0.1) & \textbf{(0.9, 0.1) / (0.9, 0.1)} & (0.9, 10) / (0.75, 5) \\
\midrule
  \multirow{2}{*}{PeMS08} & MCAR & (0.75, 1) / (0.9, 5) & \textbf{(0.9, 10) / (0.9, 10)} & (0.9, 0.1) / (0.9, 10) \\
   & MNAR & (0.75, 5) / (0.9, 0.1) & (0.9, 5) / (0.9, 10) & (0.25, 10) / (0.5, 0.1) \\
\midrule
  \multirow{2}{*}{PM2.5} & MCAR & (0.5, 10) / (0.25, 1) & (0.75, 10) / (0.9, 0.1) & \textbf{(0.9, 1) / (0.9, 1)} \\
   & MNAR & (0.75, 1) / (0.9, 10) & (0.9, 0.1) / (0.75, 10) & \textbf{(0.9, 0.1) / (0.9, 0.1)} \\
\midrule
  \multirow{2}{*}{GasSensor} & MCAR & \textbf{(0.75, 0.1) / (0.75, 0.1)} & \textbf{(0.75, 0.1) / (0.75, 0.1)} & (0.9, 1) / (0.9, 10) \\
   & MNAR & \textbf{(0.5, 5) / (0.5, 5)} & (0.9, 0.1) / (0.75, 0.1) & \textbf{(0.9, 1) / (0.9, 1)} \\
\midrule
  \multirow{2}{*}{CMAPSS} & MCAR & \textbf{(0.5, 1) / (0.5, 1)} & (0.75, 0.1) / (0.9, 10) & \textbf{(0.9, 1) / (0.9, 1)} \\
   & MNAR & \textbf{(0.75, 0.1) / (0.75, 0.1)} & (0.75, 5) / (0.75, 10) & \textbf{(0.9, 0.1) / (0.9, 0.1)} \\
\midrule
  \multirow{2}{*}{HAR} & MCAR & \textbf{(0.5, 0.1) / (0.5, 0.1)} & (0.75, 0.1) / (0.75, 10) & \textbf{(0.9, 1) / (0.9, 1)} \\
   & MNAR & (0.25, 5) / (0.25, 10) & (0.9, 5) / (0.9, 10) & \textbf{(0.9, 0.1) / (0.9, 0.1)} \\
\midrule
  \multirow{2}{*}{PhysioNet} & MCAR & (0.5, 5) / (0.25, 0.1) & (0.9, 1) / (0.9, 5) & (0.75, 5) / (0.5, 5) \\
   & MNAR & (0.75, 5) / (0.25, 0.1) & (0.75, 10) / (0.5, 0.1) & (0.5, 10) / (0.25, 5) \\
\bottomrule
\end{tabular}
\end{table*}

\begin{figure}[thb]
\centering
\includegraphics[width=0.85\textwidth]{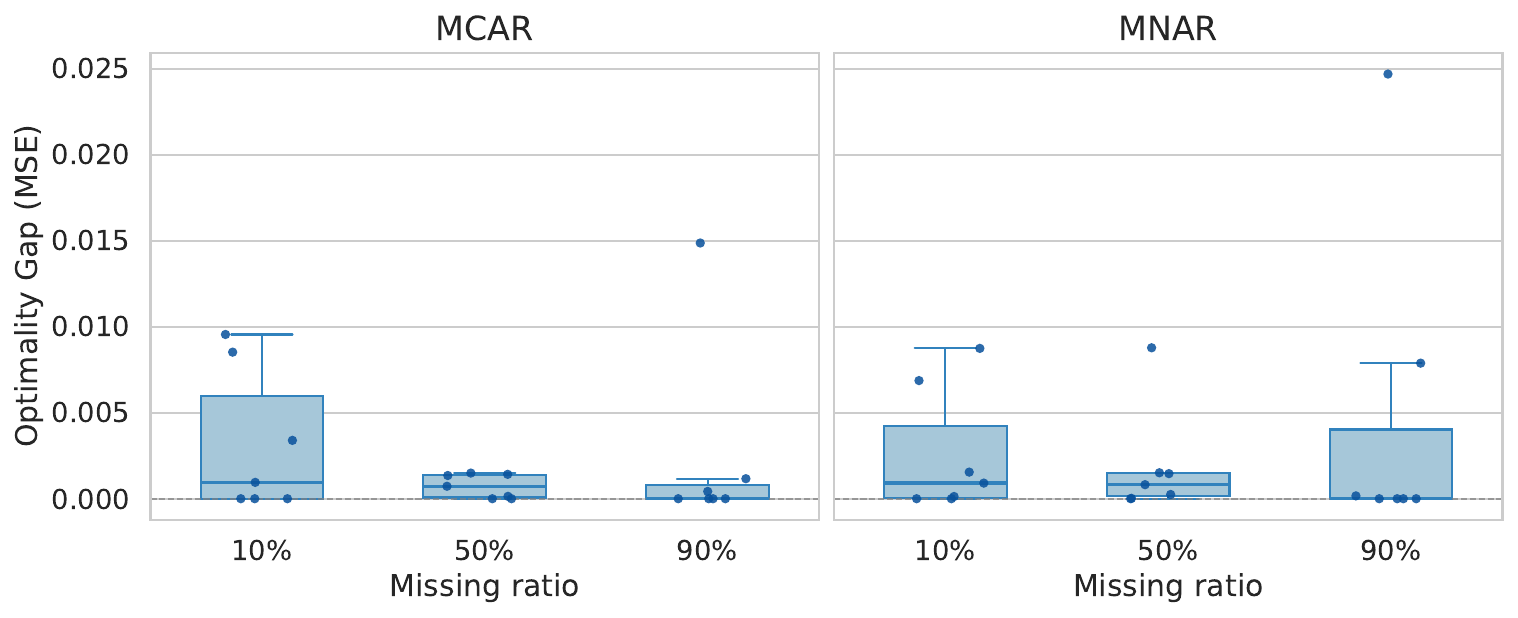}
\caption{Test-MSE gap between the deployable validation-MSE pick and the oracle test-MSE pick, for DRIO-SAITS. Each box aggregates seven datasets at a fixed (mechanism, missing ratio); individual datasets are shown as points. A gap of zero means the val-pick matches the oracle on test MSE.}
\label{fig:cv_test_mse_gap}
\end{figure}

\subsubsection{Sensitivity to Mass Relaxation}\label{sec:tau_sensitivity}
\cref{tab:appendix_tau_sensitivity} studies the effect of the unbalanced Sinkhorn marginal-relaxation parameter $\tau$. Smaller $\tau$ allows more local mass creation and destruction, while larger $\tau$ penalizes marginal mismatch more strongly. The balanced variant corresponds to $\tau\to\infty$, which enforces strict mass preservation. 

Our main observation is that finite mass relaxation is consistently better than balanced Sinkhorn. Under MCAR, the balanced variant increases MSE from $0.326$ to $0.361$, MMD$^2$ from $0.020$ to $0.041$, WF from $0.026$ to $0.030$, and $1$-D W2 from $0.127$ to $0.175$ compared with $\tau=10$. The degradation is even clearer under MNAR, where balanced Sinkhorn increases MSE from $0.367$ to $0.469$, MMD$^2$ from $0.029$ to $0.078$, WF from $0.028$ to $0.038$, and $1$-D W2 from $0.172$ to $0.268$. This supports our motivation that strict mass conservation is too restrictive for MTS imputation when there exists support mismatch between the observed empirical distribution and the true data-generating process, which can due to the combined effects of time series non-stationarity and structural missing mechanism.

Among finite values, performance is stable across $\tau\in\{1,5,10,15\}$. Under MCAR, the MSE range is only $0.326$--$0.331$, and MMD$^2$ ranges from $0.020$ to $0.022$. Under MNAR, the corresponding ranges are similarly small with MSE ranging from $0.367$ to $0.373$, MMD$^2$ from $0.029$ to $0.031$, WF from $0.028$ to $0.029$, and $1$-D W2 from $0.171$ to $0.174$. Therefore, we conclude that $\objname$ is not highly sensitive to the exact finite value of $\tau$; the important design choice is allowing unbalanced transport rather than enforcing balanced mass preservation.

We therefore fix $\tau=10$ throughout the paper as it achieves the best MSE, MMD$^2$, $1$-D W2 for MCAR, and MSE for MNAR, while remaining best or second-best on most remaining metrics. Overall, this choice permits enough marginal relaxation to account for missingness-induced support mismatch, while still penalizing excessive adversarial mass movement so that the worst-case distribution remains anchored to the empirical observations.

\begin{table*}[thb]
\centering
\caption{Sensitivity to the unbalanced Sinkhorn marginal-relaxation parameter $\tau$, under $\objname$ with SAITS backbone. Each cell reports the mean test metric with standard deviation in parentheses across all datasets and missing-ratio scenarios. ``Balanced'' denotes B-DRIO, i.e., ordinary Sinkhorn with strict mass preservation, which is equivalent to $\tau\to\infty$ in unbalanced Sinkhorn. The parameters $\alpha$ and $\gamma$ are fixed by the cross-validation results, while $\epsilon$ is computed using the heuristic described in \cref{sec:hyperparams}. \textbf{Bold}, \underline{underline}, and \textit{italic} denote the best, second-best, and third-best results, respectively. Lower is better for all metrics.}
\label{tab:appendix_tau_sensitivity}
\small
\setlength{\tabcolsep}{4pt}
\begin{adjustbox}{max width=\textwidth}
\begin{tabular}{l cccc cccc}
\toprule
& \multicolumn{4}{c}{MCAR} & \multicolumn{4}{c}{MNAR} \\
\cmidrule(lr){2-5} \cmidrule(lr){6-9}
Variant 
& MSE & MMD$^2$ & WF & $1$-D W2 
& MSE & MMD$^2$ & WF & $1$-D W2 \\
\midrule
$\tau = 1$ 
& $\underline{0.328}_{(0.381)}$ & $\underline{0.021}_{(0.032)}$ & $\mathbf{0.026}_{(0.020)}$ & $0.132_{(0.153)}$ 
& $\underline{0.368}_{(0.360)}$ & $0.030_{(0.048)}$ & $\underline{0.028}_{(0.022)}$ & $0.174_{(0.216)}$ \\
$\tau = 5$ 
& $0.331_{(0.381)}$ & $0.022_{(0.035)}$ & $0.026_{(0.021)}$ & $\underline{0.130}_{(0.155)}$ 
& $0.370_{(0.360)}$ & $\mathbf{0.029}_{(0.048)}$ & $0.028_{(0.022)}$ & $\mathbf{0.171}_{(0.216)}$ \\
$\tau = 10$ 
& $\mathbf{0.326}_{(0.381)}$ & $\mathbf{0.020}_{(0.030)}$ & $0.026_{(0.022)}$ & $\mathbf{0.127}_{(0.149)}$ 
& $\mathbf{0.367}_{(0.361)}$ & $\underline{0.029}_{(0.048)}$ & $\mathbf{0.028}_{(0.023)}$ & $\underline{0.172}_{(0.218)}$ \\
$\tau = 15$ 
& $0.329_{(0.381)}$ & $0.021_{(0.033)}$ & $\underline{0.026}_{(0.020)}$ & $0.132_{(0.154)}$ 
& $0.373_{(0.365)}$ & $0.031_{(0.051)}$ & $0.029_{(0.023)}$ & $0.173_{(0.221)}$ \\
Balanced ($\tau \to \infty$) 
& $0.361_{(0.392)}$ & $0.041_{(0.070)}$ & $0.030_{(0.026)}$ & $0.175_{(0.196)}$ 
& $0.469_{(0.436)}$ & $0.078_{(0.109)}$ & $0.038_{(0.032)}$ & $0.268_{(0.299)}$ \\
\bottomrule
\end{tabular}
\end{adjustbox}
\end{table*}

\subsubsection{Computational Complexity and Original-Scale Error}\label{sec:run_time}
\cref{tab:run_time} reports wall-clock training time together with original-scale, denormalized MSE. This table addresses two practical questions: how much computational overhead $\objname$ introduces, and whether the normalized MSE improvements translate back to the original data scale.

Compared with vanilla SAITS, training $\objname$ with the SAITS backbone incurs additional cost because each model update includes adversarial inner-loop optimization and Sinkhorn divergence computation. Across datasets, $\objname$ is about $5\times$ slower than SAITS on average. This overhead is expected, but it is incurred only during training as inference uses the same backbone and requires no adversarial updates. Moreover, we note that the absolute runtime remains practical; all datasets except HAR finish within 10 minutes with a single RTX-6000 GPU card.

Notably, the additional cost is accompanied by consistent original-scale accuracy gains. On average, $\objname$ improves over SAITS with a $12.8\%$ raw-MSE reduction, with gains exceeding $33\%$ on GasSensor and $28\%$ on CNNpred. $\objname$ also achieves top-three raw MSE on 6 of 7 datasets, including the best result on PeMS08, CMAPSS, and HAR. Thus, the manageable training cost provides robustness and accuracy gains in the original data scale, while preserving the same inference-time cost as the base SAITS model.

Compared to other benchmarks, $\objname$ is substantially more efficient than several competitive imputers. Diffusion-based methods such as CSDI and SSSD require expensive generative or denoising procedures, and PSW also involves costly OT-style optimization. In contrast, $\objname$ is faster than CSDI, SSSD, and PSW on every dataset. For example, on CMAPSS, DRIO takes $124$ seconds, compared with $5540$ seconds for CSDI, $1187$ seconds for SSSD, and $873$ seconds for PSW; on PhysioNet, DRIO takes $495$ seconds, compared with $6614$, $5647$, and $7158$ seconds, respectively. Therefore, while $\objname$ is more expensive than the plain SAITS backbone, it is far cheaper than many generative competitors while delivering strong robustness and raw-scale reconstruction performance.

These results clarify the computational trade-off of $\objname$, which introduces moderate training overhead relative to its base backbone, but this overhead in turn improves robustness and accuracy, and remains substantially below the cost of several strong benchmarks.  Overall, $\objname$ increases training cost, but preserves inference-time efficiency and provides a favorable accuracy--cost trade-off across diverse datasets.

\begin{table*}[thb]
\centering
\caption{Computational complexity and original-scale raw test MSE per dataset. Each cell is averaged across missingness mechanisms ratios, with standard deviation in parentheses. Runtime is wall-clock seconds for the training runs. For the MSE, \textbf{Bold}, \underline{underline}, and \textit{italic} denote the best, second-best, and third-best results, respectively.}
\label{tab:run_time}
\setlength{\tabcolsep}{1pt}
\begin{adjustbox}{max width=\textwidth}
\begin{tabular}{l c c c c c c c |c c c c c c c}
\toprule
& \multicolumn{7}{c}{Runtime (s) $\downarrow$} & \multicolumn{7}{c}{Raw MSE (denormalized) $\downarrow$} \\
\cmidrule(lr){2-8} \cmidrule(lr){9-15}
Method & CNN & PeMS & PM2.5 & Gas & CMAP & HAR & Physio & CNN & PeMS & PM2.5 & Gas & CMAP & HAR & Physio \\
\midrule
\multicolumn{15}{l}{\textit{Baselines}} \\
Mean & 0.29\,(0.04) & 0.02\,(0.00) & 0.08\,(0.00) & 0.16\,(0.01) & 0.33\,(0.03) & 0.21\,(0.02) & 0.31\,(0.03) & $2.00{\times}10^{6}_{(3.79{\times}10^{5})}$ & $3.53{\times}10^{8}_{(3.03{\times}10^{7})}$ & $1.72{\times}10^{3}_{(356)}$ & $0.284_{(0.040)}$ & $1.56{\times}10^{4}_{(1.04{\times}10^{3})}$ & $0.101_{(0.016)}$ & $1.24{\times}10^{4}_{(5.37{\times}10^{3})}$ \\
MF & 12\,(3.2) & 6.0\,(0.41) & 7.1\,(1.1) & 9.3\,(2.9) & 16\,(6.1) & 26\,(15) & 13\,(5.6) & $2.70{\times}10^{6}_{(5.84{\times}10^{5})}$ & $5.85{\times}10^{8}_{(6.72{\times}10^{8})}$ & $2.49{\times}10^{3}_{(946)}$ & $0.155_{(0.060)}$ & $1.44{\times}10^{4}_{(2.28{\times}10^{3})}$ & $0.162_{(0.016)}$ & $3.65{\times}10^{4}_{(1.44{\times}10^{4})}$ \\
\midrule
\multicolumn{15}{l}{\textit{Benchmarks}} \\
CSDI & 2696\,(175) & 522\,(4.4) & 1446\,(38) & 2828\,(82) & 5540\,(46) & 6056\,(3656) & 6614\,(603) & $3.02{\times}10^{5}_{(3.43{\times}10^{5})}$ & $2.08{\times}10^{9}_{(3.38{\times}10^{9})}$ & $1.45{\times}10^{3}_{(361)}$ & $0.336_{(0.349)}$ & $\mathit{1.27{\times}10^{3}}_{(2.05{\times}10^{3})}$ & $\mathit{0.019}_{(0.017)}$ & $\underline{6.76{\times}10^{3}}_{(3.52{\times}10^{3})}$ \\
SSSD & 611\,(253) & 526\,(52) & 824\,(369) & 727\,(23) & 1187\,(592) & 4143\,(49) & 5647\,(101) & $2.56{\times}10^{7}_{(2.42{\times}10^{6})}$ & $8.59{\times}10^{7}_{(5.68{\times}10^{7})}$ & $1.19{\times}10^{3}_{(661)}$ & $0.559_{(0.180)}$ & $2.49{\times}10^{4}_{(4.47{\times}10^{3})}$ & $0.028_{(0.031)}$ & $1.45{\times}10^{5}_{(1.11{\times}10^{4})}$ \\
BRITS & 47\,(3.2) & 45\,(3.4) & 167\,(1.7) & 184\,(16) & 285\,(19) & 1373\,(55) & 696\,(5.2) & $5.39{\times}10^{5}_{(5.85{\times}10^{5})}$ & $7.67{\times}10^{7}_{(7.68{\times}10^{7})}$ & $\mathit{763}_{(758)}$ & $0.035_{(0.029)}$ & $4.05{\times}10^{3}_{(4.09{\times}10^{3})}$ & $0.031_{(0.030)}$ & $9.77{\times}10^{3}_{(4.95{\times}10^{3})}$ \\
SAITS & 9.3\,(0.32) & 8.8\,(0.09) & 19\,(1.1) & 19\,(0.40) & 32\,(2.5) & 113\,(1.7) & 63\,(9.0) & $1.43{\times}10^{5}_{(1.93{\times}10^{5})}$ & $\underline{2.55{\times}10^{7}}_{(1.98{\times}10^{7})}$ & $825_{(633)}$ & $0.009_{(0.010)}$ & $\underline{958}_{(1.45{\times}10^{3})}$ & $\underline{0.017}_{(0.015)}$ & $8.90{\times}10^{3}_{(4.54{\times}10^{3})}$ \\
IF & 43\,(2.7) & 9.3\,(0.22) & 23\,(2.0) & 40\,(7.4) & 84\,(1.4) & 234\,(13) & 395\,(25) & $\underline{2.81{\times}10^{4}}_{(3.73{\times}10^{4})}$ & $3.07{\times}10^{7}_{(2.04{\times}10^{7})}$ & $1.34{\times}10^{3}_{(339)}$ & $\underline{0.004}_{(0.002)}$ & $1.76{\times}10^{3}_{(2.29{\times}10^{3})}$ & $0.029_{(0.018)}$ & $\mathit{7.43{\times}10^{3}}_{(4.47{\times}10^{3})}$ \\
nMW & 18\,(1.7) & 12\,(1.1) & 19\,(1.8) & 22\,(0.04) & 35\,(0.53) & 112\,(13) & 110\,(8.7) & $6.30{\times}10^{6}_{(8.33{\times}10^{6})}$ & $8.14{\times}10^{8}_{(7.97{\times}10^{8})}$ & $5.26{\times}10^{3}_{(5.30{\times}10^{3})}$ & $0.447_{(0.615)}$ & $8.12{\times}10^{3}_{(8.69{\times}10^{3})}$ & $0.217_{(0.187)}$ & $8.24{\times}10^{4}_{(7.31{\times}10^{4})}$ \\
MDOT & 130\,(3.5) & 103\,(4.6) & 129\,(9.3) & 136\,(3.1) & 149\,(2.0) & 264\,(14) & 295\,(10) & $1.64{\times}10^{6}_{(5.02{\times}10^{5})}$ & $1.55{\times}10^{8}_{(1.28{\times}10^{8})}$ & $1.52{\times}10^{3}_{(401)}$ & $0.103_{(0.069)}$ & $1.00{\times}10^{4}_{(2.96{\times}10^{3})}$ & $0.053_{(0.015)}$ & $1.29{\times}10^{4}_{(6.20{\times}10^{3})}$ \\
PSW & 305\,(27) & 399\,(65) & 589\,(31) & 608\,(75) & 873\,(53) & 6968\,(320) & 7158\,(153) & $\mathbf{1.65{\times}10^{4}}_{(3.82{\times}10^{4})}$ & $\mathit{2.64{\times}10^{7}}_{(1.22{\times}10^{7})}$ & $\mathbf{458}_{(546)}$ & $\mathbf{0.003}_{(0.004)}$ & $1.20{\times}10^{4}_{(1.60{\times}10^{3})}$ & $0.025_{(0.018)}$ & $\mathbf{6.40{\times}10^{3}}_{(3.11{\times}10^{3})}$ \\
\midrule
\multicolumn{15}{l}{\textit{Ours}} \\
DRIO & 53\,(16) & 36\,(1.9) & 88\,(3.8) & 103\,(31) & 124\,(25) & 845\,(301) & 495\,(181) & $\mathit{1.02{\times}10^{5}}_{(1.02{\times}10^{5})}$ & $\mathbf{2.45{\times}10^{7}}_{(2.35{\times}10^{7})}$ & $\underline{654}_{(575)}$ & $\mathit{0.006}_{(0.007)}$ & $\mathbf{926}_{(1.44{\times}10^{3})}$ & $\mathbf{0.017}_{(0.015)}$ & $8.93{\times}10^{3}_{(4.56{\times}10^{3})}$ \\
\bottomrule
\end{tabular}
\end{adjustbox}
\end{table*}